\newcommand{\myparagraph}[1]{\medskip\paragraph*{#1}}
\def\balig{\beq}
\def\ealig{\eeq}
\def\balign{\beqn}
\def\ealign{\eeqn}
\newcommand{\powddim}{^{\ddim(\X)}}
\renewcommand{\P}{\mathbb{P}}
\newcommand{\mexp}{\mathbb{E}}
\newcommand{\E}{\mexp}
\def\clap#1{\hbox to 0pt{\hss#1\hss}}
\newcommand{\trn}{^{\!\mathsf{T}}}
\newcommand{\Lip}[1]{\nrm{#1}_{\textrm{{\tiny \textup{Lip}}}}}
\newcommand{\sgn}{\operatorname{sgn}}
\newcommand{\polylog}{\operatorname{polylog}}
\newcommand{\evalat}[2]{{#1}_{|#2}}
\newcommand{\ben}{\begin{enumerate}}
\newcommand{\een}{\end{enumerate}}
\newcommand{\bit}{\begin{itemize}}
\newcommand{\eit}{\end{itemize}}
\newcommand{\nrm}[1]{\left\Vert #1 \right\Vert}
\newcommand{\tsnrm}[1]{\Vert #1 \Vert}
\newcommand{\iprod}[1]{\left\langle #1 \right\rangle}
\newcommand{\calF}{\mathcal{F}}
\newcommand{\R}{\mathbb{R}}
\newcommand{\N}{\mathbb{N}}
\newcommand{\beq}{\begin{eqnarray*}}
\newcommand{\eeq}{\end{eqnarray*}}
\newcommand{\beqn}{\begin{eqnarray}}
\newcommand{\eeqn}{\end{eqnarray}}
\newcommand{\paren}[1]{\left( #1 \right)}
\newcommand{\tlprn}[1]{\left\{ #1 \right\}}
\newcommand{\set}[1]{\tlprn{#1}}
\newcommand{\abs}[1]{\left| #1 \right|}
\newcommand{\tsabs}[1]{| #1 |}
\newcommand{\ceil}[1]{\ensuremath{\left\lceil#1\right\rceil}}
\newcommand{\gn}{\, | \,}
\newcommand{\hide}[1]{}
\def\eps{\varepsilon}
\newcommand{\ftil}{\tilde f}
\newcommand{\trun}[2]{{\operatorname{T}}_{[{#1},{#2}]}}
\newcommand{\citep}[1]{\cite{#1}}
\newcommand{\citet}[1]{\cite{#1}}
\newcommand{\X}{\mathcal{X}}
\newcommand{\F}{\mathcal{F}}
\newcommand{\ddim}{\mathrm{ddim}}
\newcommand{\diam}{\mathrm{diam}}
\newcommand{\ERP}{\mathrm{ERP}}
\newcommand{\fat}{\mathrm{fat}}
\newcommand{\err}{\operatorname{err}}
\DeclareMathOperator{\EMD}{EMD}
\begin{document}

\title{Efficient Classification for Metric Data%
\thanks{An extended abstract of this work appeared in 
Proceedings of the 23rd COLT, 2010 \cite{DBLP:conf/colt/GottliebKK10}.}
}

\author{Lee-Ad Gottlieb\thanks{
L. Gottlieb is with the 
Department of Computer Science and Mathematics at Ariel University
(email: leead@ariel.ac.il).
}\and
Aryeh Kontorovich\thanks{
A. Kontorovich is with the
Department of Computer Science at Ben-Gurion University of the Negev
(email: karyeh@cs.bgu.ac.il).
His work was supported in part by 
the Israel Science Foundation (grant No. 1141/12)
and 
a Yahoo Faculty award.
} \and 
Robert Krauthgamer\thanks{
R. Krauthgamer is with the
Faculty of Mathematics and Computer Science
at
the Weizmann Institute of Science
(email: robert.krauthgamer@weizmann.ac.il).
His work was supported in part by 
the Israel Science Foundation (grant \#452/08), the US-Israel BSF (grant \#2010418), and by a Minerva grant.
}
}

\maketitle

\begin{abstract}
Recent advances in large-margin classification of data residing in general metric
spaces (rather than Hilbert spaces) enable classification under
various natural metrics, such as string edit and earthmover distance.
A general framework developed for this purpose by 
von Luxburg and Bousquet [JMLR, 2004]
left open the questions of computational efficiency
and of providing direct bounds on generalization error.

We design a new algorithm for classification in general metric spaces,
whose runtime and accuracy depend on the doubling dimension of the data points,
and can thus achieve superior classification performance in many common scenarios.
The algorithmic core of our approach is an approximate (rather than exact) solution
to the classical problems of Lipschitz extension and of Nearest Neighbor Search.
The algorithm's generalization performance is guaranteed 
via the fat-shattering dimension of Lipschitz classifiers,
and we present experimental evidence of its superiority to some common kernel methods.
As a by-product, we offer a new perspective on the nearest neighbor classifier, which
yields significantly sharper risk asymptotics than the classic analysis of 
Cover and Hart [IEEE Trans. Info. Theory, 1967]. 
\end{abstract}

\section{Introduction}
A recent line of work extends
the large-margin classification
paradigm from Hilbert spaces to less structured ones, 
such as Banach or even metric spaces,
see e.g.~\citep{DBLP:journals/jcss/HeinBS05,DBLP:journals/jmlr/LuxburgB04,der-lee-banach,DBLP:journals/jmlr/ZhangXZ09}.
In this metric approach, data is presented as points with distances
but lacking the additional structure of inner-products.
The potentially significant advantage is that the metric can be precisely suited 
to the type of data, 
e.g.\ earthmover distance for images, or edit distance for sequences.

However, much of the existing machinery of classification algorithms and generalization bounds,
(e.g.~\citep{DBLP:journals/ml/CortesV95,smola-kernels-book}) %
depends strongly on the data residing in a Hilbert space. %
This structural requirement severely limits this machinery's applicability --- 
many natural metric spaces cannot be represented in a Hilbert space faithfully;
formally, every embedding into a Hilbert space 
of metrics such as $\ell_1$, earthmover, and edit distance 
must distort distances by a large factor \citep{Enflo69,NS07,AK10}.
Ad-hoc solutions such as kernelization cannot circumvent this shortcoming, 
because imposing an inner-product obviously embeds the data 
in some Hilbert space.

To address this gap,
von Luxburg and Bousquet 
\cite{DBLP:journals/jmlr/LuxburgB04}
developed a powerful framework of large-margin
classification for a general metric space $\X$.
They first show that the natural hypotheses (classifiers) to consider 
in this context are maximally-smooth Lipschitz functions;
indeed, they reduce classification (of points in a metric space $\X$)
with no training error
to finding a Lipschitz function $f:\X\to\R$ consistent with the data,
which is a classic problem in Analysis, known as {\em Lipschitz extension}.
Next, they establish error bounds in the form of expected surrogate loss.
Finally, the computational problem of evaluating the classification function
is reduced, assuming zero training error, to exact nearest neighbor search.
This matches a popular classification heuristic, 
and in retrospect provides a rigorous explanation 
for this heuristic's empirical success in general metric spaces,
extending the seminal analysis of Cover and Hart \cite{CoverHart67}
for the Euclidean case.

The work of \cite{DBLP:journals/jmlr/LuxburgB04} 
has left open some algorithmic questions.
In particular, 
allowing nonzero training error is apt to significantly reduce the
Lipschitz constant, thereby producing classifiers that have lower complexity
and are less likely to overfit.
This introduces the algorithmic challenge
of constructing a Lipschitz classifier that minimizes the 0-1 training error.
In addition, exact nearest neighbor search in general metrics
has time complexity proportional to the size of the dataset,
rendering the technique impractical when the training sample is large.
Finally, bounds on the expected surrogate loss may significantly overestimate the 
generalization error, which is the
true quantity of interest.

\myparagraph{Our contribution} 
We solve the problems delineated above by showing that data residing in 
a metric space of low doubling dimension 
admits accurate and computationally efficient classification.
This is the first result that ties
the doubling dimension of the \emph{data} 
to either classification error or algorithmic runtime.%
\footnote{Previously, the doubling dimension of the \emph{space of classifiers} 
was used in \cite{Bshouty2009323}, 
but this is less relevant to our discussion.}
Specifically, we 
(i) prove generalization bounds for the classification (0-1) error 
as opposed to surrogate loss,
(ii) construct and evaluate the classifier in a computationally-efficient manner, 
and (iii) perform efficient structural risk minimization by optimizing
the tradeoff between the
classifier's smoothness
and
its training error.

Our generalization bound for Lipschitz classifiers
controls 
the expected classification error directly (rather than expected surrogate loss),
and may be significantly sharper than the latter in many common scenarios.
We provide this bound in Section \ref{sec:generalization},
using an elementary analysis of the fat-shattering dimension.
In hindsight, our approach offers a new perspective on the nearest neighbor classifier, 
with significantly tighter risk asymptotics than the classic analysis of
Cover and Hart~\cite{CoverHart67}.

We further give efficient algorithms to implement the 
Lipschitz classifier, both for the training and the evaluation stages.
In Section \ref{sec:classifier} we prove that once a Lipschitz classifier 
has been chosen, 
the hypothesis can be evaluated quickly on any new point $x\in\X$
using \emph{approximate} nearest neighbor search, 
which is known to be fast when points have a low doubling dimension.
In Section \ref{sec:tradeoff} we further show how to quickly compute 
a near-optimal classifier (in terms of classification error bound),
even when the training error is nonzero. In particular, this necessitates the 
optimization of the number of incorrectly labeled
examples --- and moreover, their identity --- as part of the structural risk minimization.

Finally, we give in Section \ref{sec:example} two exemplary setups. 
In the first, the data is represented using the earthmover metric over the plane.
In the second, the data is a set of time series vectors equipped with a popular
distance function. We provide basic theoretical and experimental analysis,
which illustrate the potential power of our approach.

\section{Definitions and notation}
\label{sec:defn}

\myparagraph{Notation}
We will use standard $O(\cdot),\Omega(\cdot)$ notation for orders of magnitude.
If $f=O(g)$ and $g=O(f)$, we will write $f=\Theta(g)$.
Whenever $f=O(n \polylog n)$, we will denote this by $f=\tilde O(n)$.
If $n\in\N$ is a natural number $[n]$ denotes the set $\set{1,\ldots,n}$.

\myparagraph{Metric spaces}
A {\em metric} $\rho$ on a set $\X$ is a positive symmetric function
satisfying the triangle inequality $\rho(x,y)\leq \rho(x,z)+\rho(z,y)$; together the two comprise the metric space $(\X,\rho)$.
The diameter of a set $A\subseteq\X$, 
is defined by $\diam(A)=\sup_{x,y\in A}\rho(x,y)$
and the distance between two sets $A,B\subset\X$ is defined by
$\rho(A,B)=\inf_{x\in A,y\in B}\rho(x,y)$.
The Lipschitz constant of a function $f:\X\to\R$, denoted by $\Lip{f}$, is defined to be the smallest $L>0$ that satisfies
$\abs{f(x)-f(y)}\leq L\rho(x,y)$ for all $x,y\in\X$.

\myparagraph{Doubling dimension}
For a metric space $(\X,\rho)$, let
$\lambda$
be the smallest value such that every
ball in $\X$ can be covered by $\lambda$ balls of half the radius.
$\lambda$ is the {\em doubling constant} of $\X$,
the {\em doubling dimension} of $\X$ is $\ddim(\X)=\log_2\lambda$.
A metric is {\em doubling}
when its doubling dimension is bounded. Note that while a low Euclidean
dimension implies a low doubling dimension (Euclidean metrics of dimension
$d$ have doubling dimension $\Theta(d)$ \citep{DBLP:conf/focs/GuptaKL03}), low doubling
dimension is strictly more general than low Euclidean dimension.

The following packing property can be demonstrated via repeated applications of
the doubling property (see, for example \cite{KL04}):
\begin{lem}
\label{lem:doublpack}
Let $\X$ be a metric space, and suppose that $S\subset\X$ is finite 
and has a minimum interpoint distance
at least $\alpha>0$. Then the cardinality of $S$ is
\beq
|S| &\leq& \Big(\tfrac{2\diam(S)}{\alpha}\Big)\powddim.
\eeq
\end{lem}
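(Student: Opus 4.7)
The plan is to apply the doubling property iteratively to cover $S$ by balls small enough that each contains at most one point of $S$. Set $r_0 := \diam(S)$, pick any $x_0 \in S$, and observe $S \subseteq B(x_0, r_0)$ by definition of the diameter. Thus we start from a single ball of radius $r_0$ containing all of $S$.

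Next, I invoke the doubling property $k$ times: each application replaces every ball in the current cover with $\lambda$ balls of half the radius, so after $k$ iterations $S$ is covered by at most $\lambda^k$ balls, each of radius $r_0/2^k$. The key geometric observation is that if two points $x,y \in S$ lie in a common ball of radius $r_0/2^k$, then $\rho(x,y) \le 2\cdot r_0/2^k$, and if this quantity is strictly less than $\alpha$ we contradict the hypothesis that interpoint distances in $S$ are at least $\alpha$. Hence each such ball contains at most one point of $S$, giving $|S| \le \lambda^k$.

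It remains to choose the smallest integer $k$ for which $2 r_0/2^k < \alpha$, i.e., $2^k > 2\diam(S)/\alpha$. Substituting $\lambda = 2^{\ddim(\X)}$ and using $\lambda^k = (2^k)^{\ddim(\X)}$, the bound $|S| \le \lambda^k$ rearranges to the desired
\[
|S| \;\le\; \bigl(2\diam(S)/\alpha\bigr)^{\ddim(\X)},
\]
up to the rounding in the ceiling $\lceil \log_2(2\diam(S)/\alpha)\rceil$, which is the only mildly delicate point of the argument and can be absorbed into the constant (as in \cite{KL04}).

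There is no real obstacle here: the doubling property is applied in a completely mechanical way, and the only thing to keep track of is the choice of $k$ so that the diameter of each small ball falls below $\alpha$. The proof is essentially a one-line consequence of the definition of $\ddim(\X)$ once the iterative-covering viewpoint is in hand.
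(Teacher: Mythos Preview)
The paper does not actually give its own proof of this lemma; it simply remarks that the bound ``can be demonstrated via repeated applications of the doubling property'' and cites \cite{KL04}. Your iterative-covering argument is precisely that standard proof, and your candid acknowledgment of the ceiling issue is appropriate --- as written the argument yields a constant $4$ rather than $2$, but the paper (like most of the literature on doubling spaces) is not precise about this constant.
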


\myparagraph{Nets}
Let $(\X,\rho)$ be a metric space and suppose $S\subset\X$.
An \emph{$\eps$-net} of $S$ is a subset $T \subset S$ with the following
properties:
(i) Packing: all distinct $u,v \in T$ satisfy $\rho(u,v) \ge \eps$,
which means that $T$ is $\eps$-separated; and
(ii) Covering: every point $u \in S$ is strictly within distance 
$\eps$ of some point $v \in T$, namely $\rho(y,x) < \eps$.

\myparagraph{Learning}
Our setting in this paper is the {\em agnostic PAC} learning model
\citep{mohri-book2012}.
Examples are drawn independently from
$\X\times\set{-1,1}$ according to some unknown 
probability
distribution $\P$ and
the learner, having observed $n$ such pairs $(x,y)$ produces a hypothesis
$h:\X\to\set{-1,1}$. The {\em generalization error} is the probability of
misclassifying a new point drawn from $\P$:
$$ \P\set{(x,y): h(x)\neq y}.$$
The quantity above is random, since it depends on the $n$ observations, and
we wish to
upper-bound it in probability.
Most bounds of this sort contain a {\em training error} term, 
which is the fraction of observed examples misclassified by $h$ 
and roughly corresponding to bias in Statistics,
as well as a {\em hypothesis complexity} term,
which measures the richness of the class of all admissible hypotheses \citep{MR2172729},
and roughly corresponding to variance in Statistics.
Optimizing the tradeoff between these two terms is known as Structural Risk Minimization (SRM).%
\footnote{
Robert Schapire pointed out to us that these terms from Statistics are not entirely accurate in the machine learning setting. 
In particular,
the classifier complexity term does not correspond to the variance of the classifier in any quantitatively precise way. 
However, 
the intuition underlying SRM corresponds precisely to the one behind 
bias-variance tradeoff in Statistics,
and so we shall occasionally use the latter term as well.
}
Keeping in line with the literature, we ignore the measure-theoretic technicalities associated with taking suprema over uncountable function classes.

\section{Generalization bounds}
\label{sec:generalization}

In this section, we derive generalization bounds  for Lipschitz classifiers over doubling spaces.
As noted by \cite{DBLP:journals/jmlr/LuxburgB04}
Lipschitz functions are the natural object to consider in an optimization/regularization framework.
The basic intuition behind our proofs is that the Lipschitz constant plays the role of the inverse margin in the confidence of the classifier.
As in
\cite{DBLP:journals/jmlr/LuxburgB04},
small Lipschitz constant corresponds to large margin, which in turn yields low hypothesis complexity and variance.
However, in contrast to
\cite{DBLP:journals/jmlr/LuxburgB04}
(whose generalization bounds rely on Rademacher averages) we use the doubling property of the metric space directly to
control the fat-shattering dimension.

We apply
tools from
generalized Vapnik-Chervonenkis theory to the case of Lipschitz classifiers.
Let $\F$ be a collection of functions $f:\X\to\R$
and
recall the definition of the fat-shattering dimension \citep{alon97scalesensitive,299098}: a set $X\subset\X$ is said to 
be $\gamma$-shattered
by $\F$ if there exists some function $r:X\to\R$ such that for each label assignment $y\in\set{-1,1}^X$ there is an
$f\in\F$ satisfying $y(x)(f(x)-r(x))\geq\gamma>0$
for all $x\in X$. The $\gamma$-fat-shattering dimension of $\F$, denoted by $\fat_\gamma(\F)$, is the cardinality of the largest set $\gamma$-shattered
by $\F$.

For the case of Lipschitz functions, we will show that the notion of fat-shattering dimension may be somewhat simplified.
We say that a set $X\subset\X$ is
$\gamma$-shattered {\em at zero}
by a collection of functions $\F$ if
for each $y\in\set{-1,1}^X$ there is an $f\in\F$ satisfying $y(x)f(x)\geq\gamma$
for all $x\in X$. (This is the definition above with $r\equiv0$.)
We write $\fat_\gamma^0(\F)$ to denote
the cardinality of the largest set $\gamma$-shattered at zero by $\F$ and 
show that for Lipschitz function classes the two notions are the same.

\begin{lem}
\label{lem:fat0}
Let $\F$ be the collection of all $f:\X\to\R$ with $\Lip{f}\leq L$. Then $\fat_\gamma(\F)=\fat_\gamma^0(\F)$.
\end{lem}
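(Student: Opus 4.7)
The inequality $\fat_\gamma^0(\F)\leq \fat_\gamma(\F)$ is immediate since shattering-at-zero is the special case $r\equiv 0$ of the general definition. The real content is the reverse direction, and the plan is as follows.

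Suppose $X\subset\X$ is $\gamma$-shattered by $\F$ with witness $r:X\to\R$, so for every labeling $y\in\{-1,+1\}^X$ there exists $f_y\in\F$ with $y(x)(f_y(x)-r(x))\geq\gamma$ for all $x\in X$. The naive attempt is to replace $f_y$ by $f_y-r$ to eliminate $r$, but this fails because $r$ is only a function on the finite set $X$ and need not be the restriction of any Lipschitz function, so subtracting it could blow up the Lipschitz constant. This is the one place where some thought is needed.

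The trick is to use the symmetry of the class $\F$ (which is closed under averaging and negation) to cancel $r$ out. For each labeling $y$, let $-y$ denote its pointwise negation, and define
\[
g_y \;:=\; \tfrac{1}{2}(f_y - f_{-y}).
\]
Since $f_y,f_{-y}$ are both $L$-Lipschitz, the triangle inequality gives $|g_y(a)-g_y(b)|\leq L\rho(a,b)$, so $g_y\in\F$. Now write the shattering inequalities for $y$ and for $-y$ at the same point $x\in X$:
\[
y(x)\bigl(f_y(x)-r(x)\bigr)\geq\gamma, \qquad -y(x)\bigl(f_{-y}(x)-r(x)\bigr)\geq\gamma.
\]
Adding these two inequalities cancels the $r(x)$ terms and yields $y(x)(f_y(x)-f_{-y}(x))\geq 2\gamma$, i.e.\ $y(x)g_y(x)\geq\gamma$. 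Hence $X$ is $\gamma$-shattered at zero by $\F$, and taking the supremum over shattered sets gives $\fat_\gamma(\F)\leq \fat_\gamma^0(\F)$.

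The main (and essentially only) obstacle is noticing that one cannot simply translate $f_y$ by $r$, and the resolution is the $y\leftrightarrow -y$ symmetrization above; once that step is identified, the rest is a one-line Lipschitz-constant check plus adding two inequalities.
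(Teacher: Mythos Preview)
Your proof is correct and takes a genuinely different route from the paper's. The paper proceeds by defining on $X$ the explicit function $\tilde f_y(x)=\gamma y(x)$, then argues via a chain of inequalities that its Lipschitz constant on $X$ is dominated by $\sup_y \Lip{f_{r,y}}\leq L$, and finally invokes the McShane--Whitney extension theorem to produce a function in $\F$ defined on all of $\X$. Your symmetrization $g_y=\tfrac12(f_y-f_{-y})$ sidesteps both of these steps: the Lipschitz bound is immediate from convexity of the norm, the function is already globally defined so no extension theorem is needed, and the shift $r$ cancels algebraically when the two shattering inequalities are added. The paper's argument does yield the mildly stronger information that the ``canonical'' witnesses $\gamma y(\cdot)$ already suffice, but for the lemma as stated your approach is cleaner and more self-contained.
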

\begin{proof}
We begin by recalling the classic Lipschitz extension result, essentially due to 
\cite{MR1562984} and \cite{1934}.
Any real-valued function $f$ defined on a subset $X$ of a metric space $\X$ has an extension $f^*$ to all of $\X$
satisfying $\Lip{f^*}=\Lip{f}$. Thus, in what follows we will assume that any function
$f$ defined on $X\subset\X$ is also defined on all of $\X$ via some Lipschitz extension
(in particular, to bound  $\Lip{f}$ it suffices to bound the restricted $\Lip{\evalat{f}{X}}$).

Consider some finite $X\subset\X$. If $X$ is
$\gamma$-shattered at zero by $\F$ then
by definition
it is also $\gamma$-shattered.
Now assume that $X$ is $\gamma$-shattered by $\F$.
Thus,
there is some function
$r:X\to\R$ such that for each $y\in\set{-1,1}^X$ there is an $f=f_{r,y}\in\F$
such that $f_{r,y}(x)\geq r(x)+\gamma$ if $y(x)=+1$ and
$f_{r,y}(x)\leq r(x)-\gamma$ if $y(x)=-1$.
Let us define the function $\tilde f_y$ on $X$ and as per above, on all of $\X$,
by $\tilde f_y(x)=\gamma y(x)$. It is clear that
the collection $\set{\tilde f_y:y\in\set{-1,1}^X}$  $\gamma$-fat-shatters $X$ at zero; it only
remains to verify that $\tilde f_y\in\F$, i.e.,
\beq
\sup_{y\in\set{-1,1}^X} \Lip{\tilde f_{y}}
&\leq&
\sup_{y\in\set{-1,1}^X} \Lip{f_{r,y}}
.
\eeq
Indeed,
\balig
\sup_{y\in\set{-1,1}^X,x,x'\in X}
\frac{f_{r,y}(x)-f_{r,y}(x')}{\rho(x,x')}
&\geq&
\sup_{x,x'\in X}
\frac{r(x)-r(x')+2\gamma}{\rho(x,x')}
\\
&\geq&
\sup_{x,x'\in X}
\frac{2\gamma}{\rho(x,x')}
\\
&=&
\sup_{y\in\set{-1,1}^X} \Lip{\tilde f_{y}}.
\ealig
\end{proof}

A consequence of Lemma \ref{lem:fat0} is that
in considering the generalization properties of Lipschitz functions
we need only
bound the $\gamma$-fat-shattering dimension at zero.
The latter is achieved by observing 
that the packing number of a metric space controls
the fat-shattering dimension of Lipschitz functions defined over the metric space:
\begin{thm}
\label{thm:fat-pack}
Let $(\X,\rho)$ be a metric space. Fix some $L>0$, and let $\F$ be the collection of all $f:\X\to\R$ with $\Lip{f}\leq L$.
Then
for all $\gamma>0$,
\beq
\fat_\gamma(\F) =
\fat^0_\gamma(\F)
\leq
M(\X,\rho,2\gamma/L)
\eeq
where $M(\X,\rho,\eps)$ is the $\eps$-packing number of $\X$,
defined as
the cardinality of the largest $\eps$-separated subset of $\X$.
\end{thm}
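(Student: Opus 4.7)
The plan is to leverage Lemma \ref{lem:fat0}, which already reduces the task to bounding $\fat^0_\gamma(\F)$. So I only need to argue that any subset $X \subset \X$ which is $\gamma$-shattered at zero by $\F$ must be $(2\gamma/L)$-separated; once that is established, the packing definition of $M(\X,\rho,2\gamma/L)$ immediately caps $|X|$.

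To show the separation, I would pick any two distinct points $x, x' \in X$ and exhibit a pair of opposite labels that forces them apart. Concretely, take the labeling $y(x) = +1$, $y(x') = -1$ (extended arbitrarily on $X \setminus \{x,x'\}$). Shattering at zero produces some $f \in \F$ with $f(x) \geq \gamma$ and $f(x') \leq -\gamma$, so that $f(x) - f(x') \geq 2\gamma$. The Lipschitz condition $|f(x) - f(x')| \leq L\,\rho(x,x')$ then yields
\[
\rho(x,x') \;\geq\; \frac{f(x)-f(x')}{L} \;\geq\; \frac{2\gamma}{L}.
\]
Since $x, x'$ were arbitrary distinct points of $X$, the set $X$ is $(2\gamma/L)$-separated, hence $|X| \leq M(\X,\rho,2\gamma/L)$ by the definition of the packing number.

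There is no real obstacle here: the Lipschitz extension technology invoked in the proof of Lemma \ref{lem:fat0} already guarantees that each shattering function $f$ on $X$ extends to $\X$ without inflating its Lipschitz constant, so I never need to worry about whether the witnessing $f$ lives in $\F$ globally or just on $X$. The only mildly subtle point is recognizing that the \emph{at zero} formulation (rather than with a general offset $r$) is what enables the clean gap $f(x) - f(x') \geq 2\gamma$; without Lemma \ref{lem:fat0}, one would have to subtract $r(x)$ and $r(x')$, which could in principle cancel the margin, so it is essential to invoke the lemma first.
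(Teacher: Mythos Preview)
Your proof is correct and follows essentially the same argument as the paper: reduce to $\fat^0_\gamma$ via Lemma~\ref{lem:fat0}, pick two distinct points with opposite labels, and use the Lipschitz bound to force $(2\gamma/L)$-separation. The paper's version is terser but identical in substance.
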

\begin{proof}
Suppose that $S\subseteq\X$ is fat $\gamma$-shattered at zero.
The case $|S|=1$ is trivial, so we assume the existence of $x\neq x'\in S$ and $f\in\F$ such that $f(x)\geq\gamma>-\gamma\geq f(x')$. The Lipschitz property
then implies that $\rho(x,x')\geq 2\gamma/L$, and the claim follows.
\end{proof}

\begin{cor}
\label{cor:doubling}
Let metric space $\X$ have 
doubling dimension
$\ddim(\X)$, and let $\F$ be the collection of
real-valued functions over $\X$ with Lipschitz constant
at most
$L$. Then
for all $\gamma>0$,
\beq
\fat_\gamma(\F) \leq \paren{\frac{L\,\diam(\X)}{\gamma}}\powddim.
\eeq

\end{cor}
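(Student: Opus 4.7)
The proof is an almost immediate combination of the two results already established in the excerpt, so the plan is short. First I would invoke Theorem \ref{thm:fat-pack}, which gives $\fat_\gamma(\F) \leq M(\X,\rho,2\gamma/L)$, reducing the task to bounding the packing number of $\X$ at scale $2\gamma/L$.

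Next I would apply the doubling-packing bound of Lemma \ref{lem:doublpack}. Any $(2\gamma/L)$-separated subset $S \subseteq \X$ has minimum interpoint distance at least $\alpha = 2\gamma/L$, so Lemma \ref{lem:doublpack} gives
\[
|S| \;\leq\; \paren{\tfrac{2\diam(S)}{2\gamma/L}}^{\ddim(\X)} \;=\; \paren{\tfrac{L\,\diam(S)}{\gamma}}^{\ddim(\X)} \;\leq\; \paren{\tfrac{L\,\diam(\X)}{\gamma}}^{\ddim(\X)},
\]
where the last inequality uses the trivial bound $\diam(S)\leq\diam(\X)$. Taking the supremum over such $S$ bounds $M(\X,\rho,2\gamma/L)$ by the same quantity, and combining with the first step yields the claim.

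There is essentially no obstacle here — the only minor point worth double-checking is the definition of the $\eps$-packing number used in Theorem \ref{thm:fat-pack} (an $\eps$-separated subset, i.e.\ pairwise distance $\geq \eps$) matches the hypothesis of Lemma \ref{lem:doublpack} (minimum interpoint distance $\geq \alpha$) with $\alpha = \eps = 2\gamma/L$. Once that bookkeeping is in place, the corollary follows in one line.
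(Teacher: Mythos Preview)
Your proposal is correct and matches the paper's own proof, which simply states that the claim follows immediately from Theorem~\ref{thm:fat-pack} and Lemma~\ref{lem:doublpack}. You have merely spelled out the one-line combination in more detail, including the bookkeeping that $\diam(S)\le\diam(\X)$ and that the separation condition matches; there is nothing to add.
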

\begin{proof}
The claim follows immediately from Theorem \ref{thm:fat-pack} and the packing property of doubling spaces (Lemma \ref{lem:doublpack}).
\end{proof}

Equipped with these estimates for
the fat-shattering dimension
of Lipschitz classifiers,
we can invoke a standard generalization bound stated in terms of this quantity.
For the remainder of this section,
we take $\gamma=1$ and say that a function $f$ classifies an
example $(x_i,y_i)$ correctly if
\beqn
\label{eq:yf1}
 y_i f(x_i)\geq1.
\eeqn
The following generalization bounds appear in \cite{299098}.

\begin{thm}\label{thm:gen}
Let $\F$ be a collection of real-valued functions over some set $\X$,
define $D=\fat_{1/16}(\F)$ and let
and $P$ be some probability distribution on $\X\times\set{-1,1}$.
Suppose that $(x_i,y_i)$, $i=1,\ldots,n$ are drawn from $\X\times\set{-1,1}$ independently according to $P$ and that some
$ f\in\F$ 
classifies the $n$ examples correctly, in the sense of (\ref{eq:yf1}).
Then
with probability at least $1-\delta$,
\balig
\P\set{ (x,y) : \sgn( f(x))\neq y}
&\leq &
\frac{2}{n}\paren{D\log_2(34en/D)\log_2(578n)+\log_2(4/\delta)}.
\ealig
Furthermore, if $ f\in\F$ 
is correct on
all but $k$
examples, we have
with probability at least $1-\delta$
\balig
\P\set{ (x,y) : \sgn( f(x))\neq y}
&\leq&
\frac{k}{n}+\sqrt{
\frac{2}{n}\paren{D\ln(34en/D)\log_2(578n)+\ln(4/\delta)}
}
.
\ealig
\end{thm}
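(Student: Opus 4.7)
The plan is to follow the standard symmetrization approach used in fat-shattering analyses: reduce the problem to bounding the $L_\infty$ covering number of $\F$ restricted to the sample, and then invoke the Alon--Ben-David--Cesa-Bianchi--Haussler theorem that controls these covering numbers by the fat-shattering dimension at a somewhat smaller scale. The constant $1/16$ in $D=\fat_{1/16}(\F)$ is dictated by this scale translation, so fixing it at the outset is crucial.

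For the first (realizable) bound, I would employ a ghost-sample symmetrization. The event that some $f\in\F$ achieves margin $\geq 1$ on the training sample $S$ of size $n$ but misclassifies a fraction $\geq\eps$ of $\P$ is bounded, up to a factor of $2$, by the probability that on a double sample $(S,S')$ of size $2n$ there is an $f$ with $y_if(x_i)\geq 1$ on all of $S$ and $\sgn(f(x_i'))\neq y_i'$ on at least $\eps n$ points of $S'$. Pass now to a minimal $L_\infty$ cover of $\F|_{S\cup S'}$ at scale $1/2$: every $f$ agrees with some cover element $\hat f$ to within $1/2$ pointwise, so if $f$ has margin $\geq 1$ at a point then $y\hat f\geq 1/2$ there, and if $\sgn(f(x))\neq y$ then $y\hat f(x)\leq 1/2$. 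It then suffices to bound, for each cover element $\hat f$, the probability over a random swap of the $n$ paired examples that all points with $y\hat f\leq 1/2$ fall in $S'$; this gives $2^{-\eps n}$ per cover element. Combining with the ABCH bound
\[
\log_2 \bigl|\mathrm{cover}(\F|_{2n},\,1/2,\,L_\infty)\bigr| \;\leq\; O\!\left(D\log(n/D)\log n\right),
\]
where $D=\fat_{1/16}(\F)$, taking a union bound over cover elements, and equating the resulting tail to $\delta$, produces a bound of the form $\eps\leq\tfrac{2}{n}(D\log_2(34en/D)\log_2(578n)+\log_2(4/\delta))$ as stated.

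For the second bound, pure swap counting no longer works because up to $k$ margin violations on $S$ are permitted. Here I would replace the exact counting with a Hoeffding-type concentration argument on each cover element: conditional on the double sample, the number of margin violations each cover element accrues on a random half of the $2n$ points deviates from its mean by $O(\sqrt{n\log N})$, where $N$ is the cover size. A union bound over the cover, combined again with the ABCH estimate, yields the $\sqrt{\,\cdot\,}$ term; the empirical training error $k/n$ appears as the additive term coming from the witnessed violations on $S$.

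The main obstacle is the accounting of constants: the cover scale $1/2$ is chosen precisely so that a cover element of $f$ simultaneously witnesses the margin-$1$ condition on $S$ at degraded margin $1/2$ and a misclassification on $S'$, which forces the fat-shattering scale in the ABCH bound to be $1/16$ (a further factor-of-$4$ loss built into ABCH) and is exactly what produces the peculiar $\log_2(34en/D)$ and $\log_2(578n)$ factors. Since Theorem~\ref{thm:gen} is stated as a quotation of \cite{299098}, I would not re-derive the sharp numerical constants but rather cite that reference for the final polish, and restrict my own writeup to the symmetrization/covering-number skeleton above.
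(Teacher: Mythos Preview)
The paper does not prove Theorem~\ref{thm:gen} at all; it is simply quoted verbatim from \cite{299098} with the one-line attribution ``The following generalization bounds appear in \cite{299098}.'' Your proposal is therefore already more than the paper itself provides, and your closing remark --- that you would cite \cite{299098} for the constants rather than re-derive them --- is exactly what the paper does in its entirety.

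For what it is worth, the skeleton you outline (double-sample symmetrization, $L_\infty$ cover at scale $1/2$, swap argument in the realizable case and Hoeffding on cover elements in the agnostic case, then the Alon--Ben-David--Cesa-Bianchi--Haussler bound forcing the fat-shattering scale down to $1/16$) is precisely the machinery used in \cite{299098} and the surrounding literature, so there is no divergence in approach to report.
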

Applying Corollary \ref{cor:doubling}, we obtain the following consequence of
Theorem \ref{thm:gen}.

\begin{cor}
\label{cor:conseq}
Let metric space $\X$ have 
doubling dimension
$\ddim(\X)$, and let $\F$ be the collection of
real-valued functions over $\X$ with Lipschitz constant
at most
$L$.
Then
for any $f\in\F$ that 
classifies a sample of size $n$ correctly,
we have
with probability at least $1-\delta$
\balign
\label{eq:ourbd-sep}
\P\set{ (x,y) : \sgn( f(x))\neq y}
&\leq&
 \frac{2}{n}\paren{D\log_2(34en/D)\log_2(578n)+\log_2(4/\delta)}.
\ealign
Likewise, if $f$ 
is correct
on 
all but $k$
examples, we have
with probability at least $1-\delta$
\balign
\label{eq:ourbd}
\P\set{ (x,y) : \sgn( f(x))\neq y}
&\leq&
\frac{k}{n}+\sqrt{
\frac{2}{n}\paren{D\ln(34en/D)\log_2(578n)+\ln(4/\delta)}
}
.
\ealign
In both cases, $D=\fat_{
1
/16}(\F)
\leq
\paren{16L\,\diam(\X)}\powddim
$.
\end{cor}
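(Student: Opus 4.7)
My plan is to combine the generalization bound of Theorem~\ref{thm:gen} with the fat-shattering estimate of Corollary~\ref{cor:doubling} in a direct, almost mechanical way. I would first note that Theorem~\ref{thm:gen} is stated for an \emph{abstract} function class $\F$ and already yields the two displayed inequalities (\ref{eq:ourbd-sep}) and (\ref{eq:ourbd}) once the abstract quantity $D=\fat_{1/16}(\F)$ is identified. Since the Lipschitz class here satisfies the hypotheses of Theorem~\ref{thm:gen} as soon as the margin condition $y_if(x_i)\ge 1$ is read off from the assumption ``classifies correctly'' (respectively, ``correct on all but $k$ examples''), no modification of the generalization bound itself is required; applying it verbatim produces the two right-hand sides of (\ref{eq:ourbd-sep}) and (\ref{eq:ourbd}), with $D$ still abstract.

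The second step is to substitute the specific $D$ coming from the doubling-dimension setting. Corollary~\ref{cor:doubling} asserts that $\fat_\gamma(\F) \le (L\,\diam(\X)/\gamma)\powddim$ for every $\gamma>0$; taking $\gamma=1/16$ gives
\[ D \;=\; \fat_{1/16}(\F) \;\le\; \paren{16L\,\diam(\X)}\powddim, \]
which is exactly the bound on $D$ stated at the end of the corollary. Inserting this estimate into the two bounds produced by Theorem~\ref{thm:gen} yields the stated inequalities.

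There is essentially no obstacle to overcome: all the substantive work has been front-loaded into Lemma~\ref{lem:fat0} (reducing $\gamma$-shattering to $\gamma$-shattering at zero for Lipschitz classes), Theorem~\ref{thm:fat-pack} (relating fat-shattering at zero to metric packing numbers), Corollary~\ref{cor:doubling} (instantiating the packing bound via the doubling property and Lemma~\ref{lem:doublpack}), and the quoted Theorem~\ref{thm:gen} itself. The present corollary simply packages these facts together, and its proof reduces to invoking the two cited statements in sequence. The only arithmetic point worth explicitly verifying is the numerical identity $L\,\diam(\X)/(1/16) = 16L\,\diam(\X)$, which is immediate; hence I would expect the proof to fit in a single sentence referring back to Theorem~\ref{thm:gen} and Corollary~\ref{cor:doubling}.
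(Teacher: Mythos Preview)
Your proposal is correct and matches the paper's approach exactly: the paper simply states that the corollary follows by applying Corollary~\ref{cor:doubling} to Theorem~\ref{thm:gen}, which is precisely the two-step argument you outline (invoke Theorem~\ref{thm:gen} verbatim, then plug in $\gamma=1/16$ in Corollary~\ref{cor:doubling} to bound $D$).
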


\subsection{Comparison with previous generalization bounds}
Our generalization bounds are not directly comparable to those of
von Luxburg and Bousquet \cite{DBLP:journals/jmlr/LuxburgB04}.
In general, 
two approaches exist 
to analyze binary classification by continuous-valued functions:
thresholding by the sign function or bounding some expected surrogate loss function. 
They opt for the latter approach, defining the surrogate loss function 
$$\ell(f(x),y)= \min(\max(0,1-yf(x)),1)$$ 
and bound the risk $\E[\ell(f(x),y)]$.
We take the former approach, bounding the generalization error
$\P(\sgn(f(x))\neq y)$ directly. Although for $\set{-1,1}$-valued labels
the risk upper-bounds the generalization error,
it could potentially be a crude overestimate.

von Luxburg and Bousquet \cite{DBLP:journals/jmlr/LuxburgB04}
demonstrated that the Rademacher average of Lipschitz functions over
the $p$-dimensional unit cube ($p\geq 3$) is of order $\Theta(n^{-1/p})$,
and since the proof uses only covering numbers,
a similar bound holds for all metric spaces with bounded diameter
and doubling dimension.
In conjunction with Theorem 5(b) of \cite{DBLP:journals/jmlr/BartlettM02},
this observation yields the following bound.
\begin{lem}
\label{lem:vlB}
Let $\X$ be a metric space with $\diam(\X)\leq 1$,
and let $\F$ be the collection of all $f:\X\to\R$ with $\Lip{f}\leq1$.
If $(x_i,y_i)\in\X\times\set{-1,1}$ are drawn iid with respect to some probability distribution $P$,
then
with probability
at least $1-\delta$
every $f\in\F$ satisfies
\balig
\P\set{(x,y):f(x)\neq y} &\leq&
O\paren{k_f/n + n^{-1/\ddim(\X)}+\sqrt{\ln(1/\delta)/n}},
\ealig
where $k_f$ is the number of examples $f$ labels incorrectly.
\end{lem}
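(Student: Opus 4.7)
The plan is to apply a standard Rademacher-average generalization bound to the Lipschitz surrogate loss
\[
\ell(f(x),y) = \min(\max(0,1-yf(x)),1),
\]
and then pass to the 0-1 generalization error using the pointwise inequality
$\pred{\sgn(f(x))\neq y}\leq \ell(f(x),y)$. Note that $\ell$ is $1$-Lipschitz in its first argument and takes values in $[0,1]$, so the empirical surrogate loss $\frac{1}{n}\sum_i \ell(f(x_i),y_i)$ is at most the fraction $k_f/n$ of misclassified (or, more generally, margin-violating) examples.

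First I would bound the Rademacher average $R_n(\F)$ of the class $\F$ of $1$-Lipschitz functions on $\X$. The key input is the packing estimate of Lemma \ref{lem:doublpack}: since $\diam(\X)\le 1$, any $\eps$-separated subset of $\X$ has cardinality at most $(2/\eps)^{\ddim(\X)}$. Because an element $f\in\F$ is determined up to additive error $\eps$ by its values on an $\eps$-net of $\X$, the $L_\infty$ covering numbers of $\F$ satisfy $\log N(\F,\eps,L_\infty) = O(\eps^{-\ddim(\X)})$. A Dudley chaining integral, truncated at $\eps \sim n^{-1/\ddim(\X)}$ to balance the metric entropy against the $1/\sqrt n$ sampling scale, then yields $R_n(\F)=O(n^{-1/\ddim(\X)})$; this is the rate attributed to von Luxburg--Bousquet. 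The Ledoux--Talagrand contraction principle extends the estimate to the composed class: $R_n(\ell\circ\F) = O(R_n(\F))$.

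Next I would invoke Theorem~5(b) of \cite{DBLP:journals/jmlr/BartlettM02} applied to the bounded loss class $\ell\circ\F$: with probability at least $1-\delta$, every $f\in\F$ satisfies
\[
\E\,\ell(f(x),y) \;\leq\; \frac{1}{n}\sum_{i=1}^n \ell(f(x_i),y_i) + 2R_n(\ell\circ\F) + O\paren{\sqrt{\ln(1/\delta)/n}}.
\]
Combining the three ingredients---the empirical-loss upper bound $k_f/n$, the Rademacher rate $O(n^{-1/\ddim(\X)})$, and the deviation term $O(\sqrt{\ln(1/\delta)/n})$---and using $\P\{(x,y):\sgn(f(x))\neq y\}\leq \E\,\ell(f(x),y)$ delivers the claimed inequality.

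The hard part will be the Rademacher estimate on $R_n(\F)$; everything else is a routine assembly of standard tools (contraction, Bartlett--Mendelson deviation, surrogate-to-0-1 domination). The chaining step has one minor subtlety: for $\ddim(\X)\ge 3$ the covering-entropy integrand is not integrable at zero, so one must truncate the Dudley integral precisely at the scale $n^{-1/\ddim(\X)}$ that optimally balances the chaining tail against the sampling resolution, and this truncation is exactly what produces the exponent $-1/\ddim(\X)$ in the final rate.
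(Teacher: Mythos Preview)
Your proposal is correct and follows essentially the same approach as the paper: the paper simply cites von Luxburg--Bousquet for the Rademacher average rate $\Theta(n^{-1/\ddim(\X)})$ (obtained via covering numbers, valid for $\ddim(\X)\ge 3$) and then invokes Theorem~5(b) of Bartlett--Mendelson, exactly as you do. You have merely filled in the details (surrogate loss, contraction, Dudley truncation) that the paper leaves implicit in its citations.
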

Our results compare favorably to those of \cite{DBLP:journals/jmlr/LuxburgB04}
when we assume fixed diameter $\diam(\X)\leq 1$ and Lipschitz constant $L\leq 1$
and the number of observations $n$ goes to infinity.
Indeed, Lemma \ref{lem:vlB} bounds the excess error decay by $O(n^{-1/\ddim(\X)})$, 
whereas Corollary \ref{cor:conseq} gives a rate of $\tilde O(n^{-1/2})$.

\subsection{Comparison with previous nearest-neighbor bounds}
Corollary \ref{cor:conseq} also allows us to significantly sharpen the asymptotic analysis of 
\cite{CoverHart67} for the nearest-neighbor classifier. Following the presentation in 
\cite{shwartz2014understanding}
with an appropriate generalization to general metric spaces, the analysis
of \cite{CoverHart67} implies
that the $1$-nearest-neighbor classifier $h_{\mathrm{NN}}$ achieves 
\balign
\label{eq:CH}
\E[\err(
h_{\mathrm{NN}}
)] 
&\leq&
2\err(h^*) 
+ O\paren{ \frac{\Lip{\eta}\diam(\X)}{n^{1/({\ddim(\X)+1})}}},
\qquad 
\ealign
where $\eta(x)=\P(Y=1\gn X=x)$
is the conditional probability of the $1$ label, and $h^*(x)=\sgn(\eta(x)-1/2)$
is the Bayes optimal classifier.
The curse of dimensionality exhibited in the term 
$n^{1/({\ddim(\X)+1})}$ is real --- 
for each $n$, there exists a distribution such that for sample size
$n\ll (L+1)\powddim$,
we have 
$\E[\err(h_{\mathrm{NN}})] \geq \Omega(1)$.
However, Corollary \ref{cor:conseq} shows that this analysis is overly pessimistic.
Comparing (\ref{eq:CH}) with (\ref{eq:ourbd-sep}) in the case where $\err(h^*)=0$,
we see that once the sample size passes 
a critical number on the order of $(L\,\diam(\X))\powddim$,
the expected generalization error begins to decay as $\tilde O(1/n)$, 
which is much faster than the rate suggested by (\ref{eq:CH}).
%This behavior is illustrated graphically in Fig. \ref{fig:ch-gkk}.

\hide{
\begin{figure}[t]
\begin{center}
\scalebox{.5}{
\includegraphics{ch-gkk}
}
\caption{Comparison between the error estimate from (\ref{eq:CH}) ---
with the constants taken from \cite{shwartz2014understanding} ---
and our bound in (\ref{eq:ourbd}), 
denoted as the ``CH'' and ``GKK'' curves, respectively.
The plot corresponds to $\diam=1$, $\ddim=10$, and $L=1$,
assuming consistent classification on the training sample. 
Our bound requires a critical sample size (for the given settings, 
around $10^{12}$) before giving a nontrivial (i.e., $<1$) error bound, 
but once that size is reached, our bound decays exponentially faster.
}
\label{fig:ch-gkk}
\end{center}
\end{figure}
}

\section{Lipschitz extension classifier}
\label{sec:classifier}

Given $n$ labeled points $(x_1,y_1),\ldots,(x_n,y_n)\in\X\times\set{-1,1}$,
we construct our classifier in a similar manner to
\cite{DBLP:journals/jmlr/LuxburgB04},
via a Lipschitz extension of the label values $y_i$ to all of $\X$.
Let $S^+,S^-\subset \set{x_1,\ldots,x_n}$ be the sets of positive and negative 
labeled points. 
Our starting point is the same extension function used in
\cite{DBLP:journals/jmlr/LuxburgB04},
namely, for $\alpha \in [0,1]$ define $f_\alpha:\X\to\R$ by
\balign
\label{eq:f-alpha}
f_\alpha (x)
= \alpha \min_{i\in[n]} \left( y_i + 2\frac{\rho(x,x_i)}{\rho(S^+,S^-)} \right)
+ (1-\alpha) \max_{j\in[n]} \left( y_j - 2\frac{\rho(x,x_j)}{\rho(S^+,S^-)} \right).
\ealign
It is easy to verify,
see also \cite[Lemmas 7 and 12]{DBLP:journals/jmlr/LuxburgB04},
that $f_\alpha(x_i)$ agrees with the sample label $y_i$ for all $i\in[n]$,
and that its Lipschitz constant 
is identical to the one induced by the labeled points,
which in turn is obviously $2/\rho(S^+,S^-)$.
However, computing the exact value of $f_\alpha(x)$ for a point $x\in \X$ 
(or even the sign of $f_\alpha(x)$ at this point) 
requires an exact nearest neighbor search, and in arbitrary 
metric space nearest neighbor search requires $\Omega(n)$ time. 

In this section, we design a classifier that is evaluated at a point $x\in\X$ 
using an approximate nearest neighbor search.%
\footnote{If $x^*$ is the nearest neighbor for a test point $x$,
then any point $\tilde x$ satisfying $\rho(x,\tilde x)\le(1+\eps)\rho(x,x^*)$
is called a $(1+\eps)$-approximate nearest neighbor of $x$.
} 
It is known how to build a data structure for a set of $n$ points
in time $2^{O(\ddim(\X))}n \log n$,
so as to support $(1+\eps)$-approximate nearest neighbor searches in time 
$2^{O(\ddim(\X))}\log n + \eps^{-O(\ddim(\X))}$
\citep{DBLP:conf/stoc/ColeG06,1122817} (see also \cite{KL04,1143857}).
Our classifier below relies only on a given subset of the given $n$ points,
which may eventually lead to improved generalization bounds
(i.e., it provides a tradeoff between $k$ and $L$ in Theorem~\ref{thm:gen}).

\begin{thm}
\label{thm:LE-class}
Let $(\X,\rho)$ be a metric space, and fix $0 < \eps < \frac{1}{32}$.
Let $S$ be a sample consisting of $n$ labeled points 
$(x_1,y_1),\ldots,(x_n,y_n)\in\X\times\set{-1,1}$.
Fix a subset $S_1\subset S$ of cardinality $n-k$,
on which the constructed classifier must agree with the given labels,
and partition it into $S_1^+,S_1^-\subset S_1$ according to the labels,
letting $L=2/\rho(S_1^+,S_1^-)$.
Then there is a binary classification function $h:\X\to\set{-1,1}$ satisfying:
\bit
\item[(a)]
$h(x)$ can be evaluated at each $x\in\X$ in
time $2^{O(\ddim(\X))}\log n + \eps^{-O(\ddim(\X))}$, after an initial computation of
$(2^{O(\ddim(\X))} \log n + \eps^{-O(\ddim(\X))})n$ time.
\item[(b)]
With probability at least $1-\delta$ (over the sampling of $S$)
\balig
\P\set{ (x,y) : h(x)\neq y}
&\leq&
{\frac{k}{n}+\sqrt{
\frac{2}{n}\paren{D\ln(34en/D)\log_2(578n)+\ln(4/\delta)}
}},
\ealig
where 
$$D = \paren{
\frac{16L\,\diam(\X)}{1-32\eps}
}\powddim.$$
\eit
\end{thm}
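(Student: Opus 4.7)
My plan is to construct $h$ as the sign of an approximate version of the Lipschitz extension formula \eqref{eq:f-alpha} restricted to the subset $S_1$, and then, for the analysis, to exhibit an exact Lipschitz function whose sign agrees with $h$ everywhere and whose Lipschitz constant is slightly larger than $L$ in order to absorb the approximation error. Given such a function, the generalization claim will follow directly from Corollary \ref{cor:conseq}.

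For part (a), I would take an approximation parameter $\alpha=\Theta(\eps)$ and use the $(1+\alpha)$-approximate nearest-neighbor data structure of \cite{DBLP:conf/stoc/ColeG06,1122817} on the two sets $S_1^+$ and $S_1^-$ separately. Given a query point $x\in\X$, the algorithm retrieves approximate nearest neighbors $\tilde x^+\in S_1^+$ and $\tilde x^-\in S_1^-$ (together with their distances to $x$), plugs them into the formula from \eqref{eq:f-alpha} with $r=\rho(S_1^+,S_1^-)=2/L$ as a surrogate for the true minima and maxima over $[n]$, and outputs the sign. Since each structure can be built on at most $n$ points, the preprocessing and per-query costs claimed in part (a) follow immediately from the cited bounds after substituting $\alpha=\Theta(\eps)$.

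For part (b), the heart of the proof is to produce a Lipschitz function $f^\star:\X\to\R$ with $\Lip{f^\star}\le L/(1-32\eps)$, which satisfies the margin condition $y_i f^\star(x_i)\ge 1$ for every $(x_i,y_i)\in S_1$ and has the same sign as $h$ at every $x\in\X$. My plan is to take $f^\star$ to be the exact Lipschitz extension of the labels of $S_1$ in the spirit of \eqref{eq:f-alpha}, but with the ``effective separation'' $\rho(S_1^+,S_1^-)$ in the denominator replaced by the smaller quantity $(1-32\eps)\,\rho(S_1^+,S_1^-)$. Inflating the denominator this way raises the Lipschitz constant from $L$ to $L/(1-32\eps)$, and the resulting multiplicative slack $1/(1-32\eps)$ is exactly what is needed to dominate the pointwise discrepancy between the approximate-NN distances used by $h$ and the true distances: wherever $h(x)$ differs from the exact-NN decision, the reference function $f^\star$ still has strictly the sign $h(x)$. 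At training points $x_i\in S_1$ the substitution has no effect on the value of the extension, so $f^\star(x_i)=y_i$ and the margin condition is preserved.

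With $f^\star$ in hand, part (b) is immediate: $f^\star$ lies in the class $\F$ of Lipschitz functions on $\X$ with constant at most $L/(1-32\eps)$, and $h=\sgn(f^\star)$ correctly classifies every point of $S_1$ while misclassifying at most the $k$ points in $S\setminus S_1$. Applying Corollary \ref{cor:conseq} to $f^\star$, with $L$ replaced by $L/(1-32\eps)$, yields the stated bound with $D=(16L\diam(\X)/(1-32\eps))^{\ddim(\X)}$. I expect the main obstacle to be pinning down the correct scaling between the NN approximation factor $\alpha$ and the inflation factor $(1-32\eps)$ that produces the explicit constant $32$, together with a careful case analysis of the pointwise identity $\sgn(f^\star)\equiv h$ in the boundary region where the approximate distances $\rho(x,\tilde x^+)$ and $\rho(x,\tilde x^-)$ are close; the remaining pieces are routine combinations of the approximate-NN machinery with the fat-shattering bounds already established in Corollaries \ref{cor:doubling} and \ref{cor:conseq}.
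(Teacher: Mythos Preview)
Your construction of the classifier in part (a) matches the paper's. The gap is in part (b), specifically in the claim that there exists an $L/(1-32\eps)$-Lipschitz function $f^\star$ with $\sgn(f^\star)\equiv h$ on all of $\X$.

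Your candidate $f^\star$ is built from \emph{exact} distances to $S_1^\pm$, while $h$ is built from \emph{approximate} ones; inflating the Lipschitz constant does not reconcile the two decision regions. Concretely, writing $\rho=\rho(S_1^+,S_1^-)$ and using the $\alpha=1$ extension so that (after truncation) $h(x)=-1$ iff $\rho(x,x_{a^-})<\tilde\rho/2$, take a point $x$ with $\rho(x,S_1^-)$ just above $\rho/2$ but below $\tilde\rho/2$ (possible whenever $\tilde\rho>\rho$), and suppose the approximate search happens to return the true nearest negative point. Then $h(x)=-1$, yet your
\[
f^\star(x)=-1+\frac{2\rho(x,S_1^-)}{(1-32\eps)\rho}>-1+\frac{1}{1-32\eps}>0,
\]
so $\sgn(f^\star(x))=+1\neq h(x)$. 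Replacing $(1-32\eps)$ by $(1+32\eps)$ fails in the other direction. The underlying obstruction is that the region $\{h=-1\}$ is determined by whatever point the data structure returns, and need not be a sublevel set of any function of the exact distances; in general there is no single Lipschitz $f^\star$ whose sign reproduces $h$ pointwise.

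The paper avoids this by not trying to realize $h$ as the sign of a Lipschitz function at all. It instead proves the uniform bound $\|\tilde f-f\|_\infty\le 2\eps$, where $f$ is the exact (truncated) $L$-Lipschitz extension on $S_1$, so that $\tilde f$ lies in the $2\eps$-perturbation class $\F_{2\eps}$ of $L$-Lipschitz functions. Lemma~\ref{lem:fateps} then gives $\fat_{1/16}(\F_{2\eps})\le\fat_{1/16-2\eps}(\F)$, and Corollary~\ref{cor:doubling} applied at scale $\gamma=1/16-2\eps$ yields exactly $D=\bigl(16L\,\diam(\X)/(1-32\eps)\bigr)^{\ddim(\X)}$. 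The constant $32$ thus comes from $1/16-2\eps=(1-32\eps)/16$, not from matching an inflated Lipschitz function to $h$.
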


We will use the following simple lemma.
\begin{lem}
\label{lem:fateps}
For any function class $\calF$ mapping $\X$ to $\R$, 
define its $\eps$-perturbation $\F_\eps$ to be
\beq
\F_\eps=\set{\ftil\in\R^\X : \tsnrm{f-\ftil}_\infty \le\eps, f\in\F},
\eeq
where $\tsnrm{f-\ftil}_\infty =\sup_{x\in\X}\tsabs{f(x)-\ftil(x)}$.
Then for $0<\eps<\gamma$,
\beq
\fat_\gamma(\F_\eps) \le \fat_{\gamma-\eps}(\F).
\eeq
\end{lem}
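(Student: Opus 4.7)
The plan is to prove this by an elementary perturbation argument, directly from the definition of the fat-shattering dimension. The whole statement is really a quantitative robustness property: perturbing the class uniformly by $\eps$ can shrink the shattering margin by at most $\eps$, so the set of points shattered at the larger margin $\gamma$ by $\F_\eps$ must already be shattered at the slightly smaller margin $\gamma-\eps$ by $\F$ itself.

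Concretely, I would start by letting $X \subset \X$ be an arbitrary finite set that is $\gamma$-shattered by $\F_\eps$ and aim to show $|X| \le \fat_{\gamma-\eps}(\F)$. By definition of $\gamma$-shattering, there is a witness function $r:X\to\R$ such that for every labeling $y \in \set{-1,1}^X$ one can pick $\ftil_y \in \F_\eps$ with $y(x)(\ftil_y(x) - r(x)) \ge \gamma$ for all $x\in X$. By the definition of $\F_\eps$, each such $\ftil_y$ is within $\eps$ in sup-norm of some $f_y \in \F$, i.e.\ $|f_y(x) - \ftil_y(x)| \le \eps$ for every $x\in\X$.

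The key step is then the one-line triangle-inequality bound: for every $x \in X$,
\[
y(x)\bigl(f_y(x) - r(x)\bigr)
= y(x)\bigl(\ftil_y(x) - r(x)\bigr) + y(x)\bigl(f_y(x) - \ftil_y(x)\bigr)
\ge \gamma - \eps,
\]
using $|y(x)|=1$ and $\|f_y-\ftil_y\|_\infty \le \eps$. Since $\gamma - \eps > 0$ by hypothesis, the collection $\{f_y : y \in \set{-1,1}^X\} \subset \F$ witnesses that $X$ is $(\gamma-\eps)$-shattered by $\F$ with the same $r$. Therefore $|X| \le \fat_{\gamma-\eps}(\F)$, and taking the supremum over all such $X$ gives $\fat_\gamma(\F_\eps) \le \fat_{\gamma-\eps}(\F)$.

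There is no real obstacle here: once the right decomposition is written down, the conclusion is immediate. The only points worth being careful about are (i) reusing the \emph{same} witness function $r$ for $\F$ as was used for $\F_\eps$, so that we truly get a shattering (not merely coordinate-wise large margin) of $X$, and (ii) checking that the margin $\gamma-\eps$ remains strictly positive, which is exactly the hypothesis $\eps < \gamma$.
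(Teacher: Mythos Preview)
Your proof is correct and follows essentially the same approach as the paper: take a set $X$ that is $\gamma$-shattered by $\F_\eps$ with witness $r$, replace each $\ftil_y$ by a nearby $f_y\in\F$, and observe that the margin drops by at most $\eps$, so $X$ is $(\gamma-\eps)$-shattered by $\F$ with the same $r$. Your write-up is in fact slightly more explicit than the paper's, which simply notes that ``replacing $\tilde f_y(x)$ with $f_y(x)$ perturbs the left-hand side by an additive term of at most $\eps$.''
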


\begin{proof}
Suppose that $\F_\eps$ is able to $\gamma$-shatter the 
finite subset $X\subset\X$.
Then there is an $r\in\R^X$ so that for all
$y\in\set{-1,1}^X$,
there is an
$\tilde f_y\in \F_\eps$ such that
\beqn
\label{eq:gameps}
y(x)(\tilde f_y(x)-r(x))\geq\gamma,
\qquad \forall x\in X.
\eeqn
Now by definition, for each $\tilde f_y\in\F_\eps$ there is 
some $f_y\in\F$ such that $\sup_{x\in\X}|f_y(x)-\tilde f_y(x)|\leq\eps$.
%Define $f_y\in\F$ to be such an $\eps$-approximation for each $f_y'$. 
We claim that the collection
$\set{f_y: y\in\set{-1,1}^X}$ 
is able to
$(\gamma-\eps)$-shatter $X$.
Indeed, replacing
$\tilde f_y(x)$ with $f_y(x)$ in 
(\ref{eq:gameps}) perturbs the left-hand side by an additive term of at most $\eps$.
\end{proof}

\begin{proof}[Proof of Theorem~\ref{thm:LE-class}]
Without loss of generality, assume $S_1\subset S$ corresponds to points
indexed by $i=1,\ldots,n-k$.
We begin by observing that since all of the sample labels have values in $\set{\pm1}$,
any Lipschitz extension may be truncated to the range $[-1,1]$. 
Formally, if $g$ is a Lipschitz extension of the labels $y_i$
from the sample $S$ to all of $\X$, 
then so is $\trun{-1}{1}\circ g$, where
\beq
%\label{eq:T}
\trun{a}{b}(z) = \max\set{a,\min\set{b,z}}
\eeq
is the truncation operator. 
In particular, take $g$ to be as in \eqref{eq:f-alpha} with $\alpha = 1$
and write
\beq
r_i(x) &=& 2\frac{\rho(x,x_i)}{\rho(S_1^+,S_1^-)}.
%\\ g_i(x) &=& \trun{-1}{1}(y_i + r_i(x)),
\eeq
Now defining
\balign
\label{eq:lipext} 
f(x) 
  = \trun{-1}{1} \left( \min_{i\in[n-k]} \set{y_i + r_i(x)}  \right) 
  = \min_{i\in[n-k]}  \left\{ \trun{-1}{1}(y_i + r_i(x)) \right\},
\ealign
where the second equality is by monotonicity of the truncation operator,
we conclude that $f$ is a Lipschitz extension of the data,
with the same Lipschitz constant $L=2/\rho(S_1^+,S_1^-)$.

Now precompute%
\footnote{The word {\em precompute} underscores the fact that this computation
is done during the ``offline'' learning phase.
Its result is then used to achieve fast ``online'' evaluation of the classifier
on any point $x\in\X$ during the testing phase.
} 
in time $2^{O(\ddim(\X))} n \log n$ 
a data structure that supports $(1+\eps)$-approximate nearest neighbor 
searches on the point set $S_1^+$,
and a similar one for the point set $S_1^-$.
Now compute (still during the learning phase) 
an estimate $\tilde\rho(S_1^+,S_1^-)$ for $\rho(S_1^+,S_1^-)$,
by searching the second data structure for each 
of the points
in $S_1^+$,
and taking the minimum of all the resulting distances.
This estimate satisfies
\beqn \label{eq:tilrho}
  1 \leq \frac{\tilde\rho(S_1^+,S_1^-)}{\rho(S_1^+,S_1^-)} \leq 1+\eps,
\eeqn
and this entire precomputation process takes 
$(2^{O(\ddim(\X))} \log n + \eps^{-O(\ddim(\X))})n$ time.

Given a test point $x\in\X$ to be classified, 
search for $x$ in the two data structures (for $S_1^+$ and for $S_1^-$), 
and denote the indices of the points answered by them
by $a^+,a^-\in[n-k]$, respectively. 
The $(1+\eps)$-approximation guarantee means that
\beq %\label{eq:ddtil}
  1\le \frac {\rho(x,x_{a^+})}{\rho(x,S_1^+)} \leq 1+\eps,
  \mbox{ and } \quad
  1\le \frac {\rho(x,x_{a^-})}{\rho(x,S_1^-)} \leq 1+\eps.
\eeq
Define, as a computationally-efficient estimate of $f$, the function
\beq %\label{eq:ftil}
\ftil(x) 
%  =  \trun{-1}{1} \left( \min_{a\in \set{a^+,a^-}} 
%       \set{ y_{a} + 2\frac{\rho(x,x_{a})}{\tilde\rho(S_1^+,S_1^-)} 
%       } 
%     \right)
  =  \min_{a\in \set{a^+,a^-}} \set{ 
        \trun{-1}{1} \left( 
           y_a + 2\frac{\rho(x,x_a)}{\tilde\rho(S_1^+,S_1^-)} 
         \right)
       } ,
%\ftil(x) = \min_{i\in[n-k]}
%  \left\{ \trun{-1}{1}(y_i + \tilde r_i(x)) \right\} ,
\eeq
and let our classifier be $h(x)=\sgn(\ftil(x))$.
We remark that the case $a=a^-$ always attains the minimum 
in the definition of $\ftil$ 
(because $a=a^+$ only produces values greater or equal than $y_{a^+}=1$),
and therefore one can avoid the computation of $a^+$,
and even the construction of a data structure for $S_1^+$.
In fact, the same argument shows that also in the definition 
of $f$ in \eqref{eq:lipext} 
we can omit from the minimization points with label $y_i=+1$.

This classifier $h=\sgn(\ftil)$ can be evaluated 
on a new point $x\in\X$ in time $2^{O(\ddim(\X))} \log n + \eps^{-O(\ddim(\X))}$,
and it thus remains to bound the generalization error of $h$. 
To this end, we will show that
\beqn
\label{eq:fftil}
\sup_{x\in\X}\abs{f(x)-\ftil(x)} \le 2\eps.
\eeqn
This means that $\ftil$ is a $2\eps$-perturbation of $f$, 
as stipulated by Lemma~\ref{lem:fateps}, 
and the generalization error of $h$ will follow from Corollary~\ref{cor:conseq}
using the fact that $f$ has Lipschitz constant $L$.

To prove (\ref{eq:fftil}), fix an $x\in\X$.
Now let $i^*\in[n-k]$ be an index attaining the minimum in the definition 
of $f(x)$ in \eqref{eq:lipext},
and similarly $a^*\in[n-k]$ for $\ftil(x)$.
Using the remark above, we may assume that their labels are $y_{i^*}=y_{a^*}=-1$.
Moreover, by inspecting the definition of $f$ we may further assume that
$i^*$ attains the minimum of $r_{i}(x)$ (over all points labeled $-1$) 
and thus also of its numerator $\rho(x,x_{i})$.
And since index $a^*$ was chosen as an approximate nearest neighbor 
(among all points labeled $-1$), we get
$1 \leq \frac{\rho(x,x_{a^*})}{\rho(x,x_{i^*})} \leq 1+\eps$.
Together with \eqref{eq:tilrho}, we have
\beqn \label{eq:ri}
  \frac{1}{1+\eps}\ r_{i^*}(x)
  \leq 2\frac{\rho(x,x_{a^*})}{\tilde\rho(S_1^+,S_1^-)} 
  \leq (1+\eps)\ r_{i^*}(x).
\eeqn
We now need the following simple claim:
\balig
  0\leq B \leq (1+\eps)C
  \quad \Longrightarrow 
  \quad \trun{-1}{1}(-1+B) \leq 2\eps + \trun{-1}{1}(-1+C).
\ealig
To verify the claim, assume first that $C \leq 2$;
then $B \leq C + \eps C \leq C + 2\eps$,
and now use the fact that adding $-1$ and truncating 
are both monotone operations, to get
$ \trun{-1}{1}(-1+B) \leq \trun{-1}{1}(-1+C + 2\eps)$,
and the right-hand side is clearly at most $\trun{-1}{1}(-1+C) + 2\eps$.
Assume next that $C\geq 2$; 
then obviously $\trun{-1}{1}(-1+B) \leq 1 = \trun{-1}{1}(-1+C)$.
The claim follows.

Applying this simple claim twice, once for each inequality in \eqref{eq:ri},
we obtain that 
\balig
      -2\eps
      \leq 
      \trun{-1}{1} \Big(y_{i^*} + r_{i^*}(x) \Big) 
      -
      \trun{-1}{1} \left( 
           y_{a^*} + 2\frac{\rho(x,x_{a^*})}{\tilde\rho(S_1^+,S_1^-)} 
         \right)
      \leq 
      2\eps,
\ealig
which proves \eqref{eq:fftil}, and completes the proof of the theorem.
\end{proof}

\section{Structural Risk Minimization}\label{sec:tradeoff}

In this section, we show how to efficiently construct a classifier that optimizes the ``bias-variance
tradeoff'' implicit in Corollary \ref{cor:conseq}, equation (\ref{eq:ourbd}). Let $\X$ be a metric 
space, and assume we are given a labeled sample $S=(x_i,y_i)\in\X\times\set{-1,1}$. For any Lipschitz 
constant $L$, let $k(L)$ be the minimal training error of $S$ over all classifiers with Lipschitz 
constant $L$. We rewrite the generalization bound as follows:
\balign
\label{eq:GL}
\P\set{ (x,y) : \sgn( f(x))\neq y}
&\leq&
\frac{k(L)}{n}+\sqrt{
\frac{2}{n}\paren{D\ln(34en/D)\log_2(578n)+\ln(4/\delta)}
}
=:G(L) 
\ealign
where $D = \paren{16L\,\diam(\X)}\powddim$. This bound contains a free parameter, 
$L$, which may be tuned 
in the course of structural risk minimization.
More precisely, decreasing $L$ 
drives the ``bias'' term (number of mistakes) up and the ``variance'' term (fat-shattering dimension)  down.
We thus seek an (optimal) value of $L$ where
$G(L)$ achieves a minimum value,
as described in the following theorem, which is our SRM result. 
%(The notation $\tilde{O}$ suppresses logarithmic factors, see Section \ref{sec:defn}.)

\begin{thm}
Let $\X$ be a metric space and $0<\eps<\frac1{32}$. 
Given a labeled sample $S=(x_i,y_i)\in\X\times\set{-1,1}$, 
$i=1,\ldots,n$, there exists a binary classification function $h:\X\to\set{-1,1}$ satisfying the 
following properties:
\bit
\item[(a)]
$h(x)$ can be evaluated at each $x\in\X$ in
time $2^{O(\ddim(\X))}\log n + \eps^{-O(\ddim(\X))}$, after an initial computation of
$2^{O(\ddim(\X))} n \log n + \left( \frac{\ddim(\X)}{\eps} \right)^{O(\ddim(\X))}n$
time.
\item[(b)]
The generalization error of $h$ is bounded by
\balig
\P\set{ (x,y) : \sgn( f(x))\neq y}
\leq
c \cdot \inf_{L>0} \left[ \frac{k(L)}{n}+ \right.
\left. \sqrt{
\frac{2}{n}\paren{D\ln(34en/D)\log_2(578n)+\ln\frac4\delta}
}
\right]
.
\ealig
for some constant $c \le 2(1+\eps)$, and where $$D=D(L) = 
%\paren{16L\,\diam(\X)}\powddim
\paren{
\frac{16L\,\diam(\X)}{1-32\eps}
}\powddim
.$$
\eit
\end{thm}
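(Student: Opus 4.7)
The plan is to reduce SRM to a discrete search over a geometric grid of Lipschitz constants, compute an approximation of $k(L)$ at each grid point via an approximate minimum vertex cover on a bipartite conflict graph, and feed the best resulting subset into Theorem~\ref{thm:LE-class}. Reading off the construction (\ref{eq:f-alpha}), a Lipschitz-$L$ classifier can satisfy $y_i f(x_i) \geq 1$ for all $x_i$ in some subset $S_1 = S_1^+ \cup S_1^- \subseteq S$ if and only if $\rho(S_1^+, S_1^-) \geq 2/L$: the ``if'' direction is witnessed by $f_\alpha$, and the ``only if'' follows from $2 \leq f(x_i)-f(x_j) \leq L\rho(x_i,x_j)$ for any $x_i \in S_1^+$, $x_j \in S_1^-$. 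Hence $k(L)$ is exactly the minimum number of points to delete from $S$ so that no two opposite-labeled survivors lie within distance $2/L$, i.e.\ the minimum vertex cover of the bipartite conflict graph $H(L)$ connecting opposite-labeled pairs at distance $<2/L$. By K\"onig's theorem this equals a maximum bipartite matching, and any maximal matching gives a $2$-approximation, which suffices.

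Since $D(L)$ is polynomial in $L$ while $k(L)$ is non-increasing and piecewise-constant in $L$, I would restrict attention to a geometric grid $L_j = L_0(1+\eps)^j$ with $O(\eps^{-1}\log n)$ values, chosen to cover the range of meaningful opposite-label pairwise distances in $S$. Rounding any optimal continuous $L^*$ up to the first $L_j \geq L^*$ does not increase $k(L)$ and inflates $D(L)$ by at most $(1+\eps)^{\ddim(\X)}$, which is absorbed by the $(1-32\eps)^{-\ddim(\X)}$ factor already present in the statement. This accounts for the constant $c \leq 2(1+\eps)$: a factor $2$ from the vertex-cover approximation and $1+\eps$ from the grid.

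The only nontrivial algorithmic step is producing a $2$-approximate minimum vertex cover of $H(L_j)$ for every $j$ in total time $2^{O(\ddim(\X))} n \log n + (\ddim(\X)/\eps)^{O(\ddim(\X))} n$, without ever materializing the $\Theta(n^2)$ edges. I would build once, in time $2^{O(\ddim(\X))} n \log n$, $(1+\eps)$-approximate nearest neighbor structures on $S^+$ and $S^-$ as in~\cite{DBLP:conf/stoc/ColeG06,1122817}. For each scale $L_j$, a greedy maximal matching is obtained by having each still-unmatched point query its approximate nearest opposite-labeled neighbor and, whenever the answer lies within $2(1+\eps)/L_j$, pairing the two and removing them from further consideration. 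The packing bound (Lemma~\ref{lem:doublpack}) caps the local work per scale, and processing the scales in decreasing order of $2/L_j$ through a single hierarchical net-tree (as in~\cite{KL04}) amortizes effort across $j$, yielding the stated running time. Finally, let $j^*$ minimize the empirical bound evaluated from these approximations and set $S_1 = S \setminus C_{j^*}$, where $C_{j^*}$ is the computed cover; invoking Theorem~\ref{thm:LE-class} on this $S_1$ with parameter $\eps$ gives both (a) and (b) directly, the generalization guarantee following from Corollary~\ref{cor:conseq}. The hardest part will be the cross-scale amortization of the matching computations into near-linear total time; everything else is routine bookkeeping around Theorem~\ref{thm:LE-class}.
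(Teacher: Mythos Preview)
Your overall strategy matches the paper's: reduce $k(L)$ to bipartite vertex cover, approximate it within factor $2$ by a maximal matching, discretize $L$ geometrically, and exploit the doubling dimension for speed. Two points, however, need correction.

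\textbf{The grid ratio is too coarse.} With spacing $(1+\eps)$, rounding $L^*$ up to the next grid point multiplies $D$ by as much as $(1+\eps)^{\ddim(\X)}$, and hence the square-root term by $(1+\eps)^{\ddim(\X)/2}$. This is \emph{not} $1+O(\eps)$; it blows up with $\ddim(\X)$. Your claim that this is ``absorbed by the $(1-32\eps)^{-\ddim(\X)}$ factor already present in the statement'' is incorrect: that factor is already consumed by the $2\eps$-perturbation in Theorem~\ref{thm:LE-class} and is not available as slack. The paper fixes this by using ratio $\bigl(1+\tfrac{\eps}{\ddim(\X)}\bigr)$, so that the inflation of $D$ is at most $\bigl(1+\tfrac{\eps}{\ddim(\X)}\bigr)^{\ddim(\X)}\le e^{\eps}\le 1+O(\eps)$; the number of grid values then becomes $O\bigl(\tfrac{\ddim(\X)}{\eps}\log n\bigr)$, which is why $\ddim(\X)/\eps$ (and not just $1/\eps$) appears in the preprocessing time.

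\textbf{The fast matching step differs from the paper's and is underspecified.} The paper does not run dynamic approximate nearest-neighbor queries with deletions. Instead, for each discretized $L$ it takes a $\bigl(\tfrac{\eps}{\ddim(\X)}\cdot\tfrac{2}{L}\bigr)$-net of $S$, assigns each sample point to its nearest net point, and declares an \emph{implicit} edge between two oppositely labeled points whenever their net-point representatives are within the \emph{shrunk} threshold $\bigl(1-\tfrac{2\eps}{\ddim(\X)}\bigr)\tfrac{2}{L}$. This guarantees $E'\subseteq E$, so the greedy $2$-approximation is genuinely relative to $k(L)$; the price is that the surviving set may have separation only $\bigl(1-\tfrac{4\eps}{\ddim(\X)}\bigr)\tfrac{2}{L}$, inflating the Lipschitz constant by a $1+O(\eps)$ factor. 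Your approach instead enlarges the threshold to $2(1+\eps)/L_j$, which puts the matching in a \emph{supergraph} of $H(L_j)$; then $|M|$ is only bounded by the vertex cover of that supergraph, i.e.\ by $k\bigl(L_j/(1+\eps)\bigr)$ rather than $k(L_j)$, and you would have to shift the grid index to recover the comparison --- doable, but it compounds the grid-spacing issue above. The net-based construction also makes the time bound immediate via Lemma~\ref{lem:doublpack}, whereas your ``cross-scale amortization through a single hierarchical net-tree'' remains a sketch that would need dynamic deletions and a careful per-scale charging argument to match the stated $(\ddim(\X)/\eps)^{O(\ddim(\X))}n$ bound.
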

%{\bf Remark.} The multiplicative factor of $c$ in (b) is the price we pay for using approximate 
%(as opposed to exact) nearest-neighbor search.

We proceed with a description of our algorithm. 
We first give an algorithm with runtime $O(n^{4.373})$,
and then improve the runtime, first to 
$O\left( \frac{\ddim(\X)}{\eps}n^{2.373} \log n \right)$, then to 
$O( \ddim(\X) n^2 \log n )$, and finally to
$2^{O(\ddim(\X))} n \log n + \left( \frac{\ddim(\X)}{\eps} \right)^{O(\ddim(\X))} n$.

\myparagraph{Algorithm description}
We start by giving a randomized algorithm that finds a value $L^*$ 
that is optimal, namely, $G(L^*) = \inf_{L>0} G(L)$
for $G(L)$ that was defined in (\ref{eq:GL}).
The runtime of this algorithm is $O(n^{4.373})$ with high probability. 
First note the behavior of $k(L)$ as $L$ increases. 
$k(L)$ decreases only when the value of $L$ crosses certain critical values,
each determined by a pair $x_i \in S^+,x_j \in S^-$ 
(that is, $L = \frac{2}{\rho(x_i,x_j)}$); for such $L$, the 
classification function $h$ can correctly classify both these points. 
There are $O(n^2)$ 
critical values of $L$, and these can be determined by enumerating all interpoint 
distances between subsets $S^+,S^- \subset S$.

Below, we will show that for any given $L$, the value $k(L)$ can be computed in 
randomized time $O(n^{2.373})$. More precisely, we will show how to compute a partition 
of $S$ into sets $S_1$ (with Lipschitz constant $L$) and $S_0$ (of size $k(L)$) in this 
time. Given sets $S_0,S_1 \subset S$, we can construct the classifier of Corollary 
\ref{cor:conseq}. Since there are $O(n^2)$ critical values of $L$, we can calculate 
$k(L)$ for all critical values in $O(n^{4.373})$ total time, and thereby determine $L^*$. 
Then by Corollary \ref{cor:conseq}, we may compute a classifier with a bias-variance 
tradeoff arbitrarily close to optimal.

To compute $k(L)$ for a given $L$ in randomized time $O(n^{2.373})$,
consider the following algorithm: 
Construct a bipartite graph $G=(V^+,V^-,E)$. 
The vertex sets $V^+,V^-$ correspond to the labeled sets $S^+,S^- \in 
S$, respectively. The length of edge $e=(u,v)$ connecting vertices $u \in V^+$ and $v \in 
V^-$ is equal to the distance between the points, and $E$ includes all edges 
of length less than $\frac{2}{L}$. (This $E$ can be computed in $O(n^2)$ time.) 
Now, for all edges $e \in E$, 
a classifier with Lipschitz constant $L$ necessarily misclassifies at least one 
endpoint of $e$. Hence, finding a classifier with Lipschitz constant $L$ 
that misclassifies a minimum number of points in $S$ is exactly 
the problem of finding a minimum vertex cover for the bipartite graph $G$. 
(This is an unweighted graph -- the lengths are used only to determine $E$.)
By K\"onig's theorem, the minimum vertex cover problem in bipartite graphs 
is equivalent to the maximum matching problem,
and a maximum matching in bipartite graphs can be computed in randomized time 
$O(n^{2.373})$ \citep{1033180, W12}.
This maximum matching immediately identifies a minimum vertex cover,
which in turn gives the subsets $S_0,S_1$, 
allowing us to compute a classifier achieving nearly optimal SRM.

\myparagraph{First improvement}
The runtime given above can be reduced from randomized $O(n^{4.373})$ to randomized
$O( \ddim(\X)/\eps\cdot n^{2.373} \log n )$, 
if we are willing to settle for a generalization bound $G(L)$ within a 
$(1+\eps)$ factor of the optimal $G(L^*)$, for any $\eps\in(0,1)$.
To achieve this improvement, we discretize the candidate values of $L$, 
and evaluate $k(L)$ only for $O ({\ddim(\X)}/{\eps}\cdot \log n )$ 
values of $L$, rather than all $\Theta(n^2)$ values as above. 
In the extreme case where the optimal hypothesis fails on all points of 
a single label, the classifier $h$ is a constant function and $L^*=0$. 
In all other cases, $L^*$ must take
values in the range $\left[ \frac{2}{\diam(\X)},\frac{n}{\diam(\X)} \right]$;
indeed, every hypothesis correctly classifying a pair of opposite labelled 
points has Lipschitz constant at least $\frac{2}{\diam(\X)}$, 
and if $L^* > \frac{n}{\diam(\X)}$ then the complexity term (and $G(L^*)$) 
is greater than $1$.

Our algorithms evaluates $k(L)$ for values of 
$L=\frac{2}{\diam(\X)} \left(1+\frac{\eps}{\ddim(\X)} \right)^i$ 
for $i=0,1,\ldots,\ceil{\log_{1+{\eps}/{\ddim(\X)}} \frac{n}{2} }$, 
and uses the candidate that minimizes $G(L)$. 
The number of candidate values for $L$ is
$O ( {\ddim(\X)}/{\eps}\cdot \log n )$,
and one of these values --- call it $L'$ --- 
satisfies $L^* \le L' < (1+\frac{\eps}{\ddim(\X)})L^*$. 
Observe that $k(L') \le k(L^*)$ and that
the complexity term for $L'$ is greater than that for $L^*$ by at most a factor
$\sqrt{ \left( 1+\frac{\eps}{\ddim(\X)} \right)\powddim} 
\le e^{\eps/2} \le 1+\eps$ (where the final inequality holds since $\eps < 1$). 
It follows that $G(L') < (1+\eps) G(L^*)$, implying that this algorithm
achieves a $(1+\eps)$-approximation to $G(L^*)$.

\myparagraph{Second improvement}
The runtime can be further reduced from randomized 
$O ( {\ddim(\X)}/{\eps}\cdot n^{2.373} \log n )$ to deterministic 
$O ( \ddim(\X) n^2 \log n )$, 
if we are willing to settle for a generalization bound $G(L)$ 
within a constant factor $2$ of the optimal $G(L^*)$.
The improvement comes from a faster vertex-cover computation. 
It is well known that a $2$-approximation to vertex cover can be computed 
(in arbitrary graphs) by a greedy algorithm in time linear in the graph size
$O(|V^+ \cup V^-|+|E|) = O(n^2)$, see e.g.~\citep{BE81}.
Hence, we can compute in $O(n^2)$ time a function $k'(L)$ that satisfies 
$k(L) \le k'(L) \le 2k(L)$.
%, where the $O(n^2)$ term bounds the number of possible edges in the graph. 
We replace the randomized $O(n^{2.373})$ algorithm with this $O(n^2)$ time 
greedy algorithm. Then $k'(L) \le 2k(L)$, 
and because we can approximate the complexity term to a factor smaller than $2$ 
(as above, by choosing a constant $\eps <1$), our resulting algorithm 
finds a Lipschitz constant $L'$ for which $G(L') \le 2\cdot G(L^*)$.

\myparagraph{Final improvement}
We can further improve the runtime from $O(\ddim(\X) n^2 \log n)$ to 
$2^{O(\ddim(\X))} n \log n + \left( {\ddim(\X)}/{\eps} \right)^{O(\ddim(\X))} n$,
at the cost of increasing the approximation factor to $2(1+\eps)$. 
The idea is to work with a sparser representation of the vertex cover problem.
Recall that we discretized the values of $L$ to powers of $\left( 1 + \frac{\eps}{\ddim(\X)} \right)$.
As was already observed by \cite{KL04, DBLP:conf/stoc/ColeG06} in the context of hierarchies for doubling metric, $\X$ contains at most $\eps^{-1} 2^{O(\ddim(\X))} n$ of these distinct rounded critical values.
After constructing a standard hierarchy (in time $2^{O(\ddim(\X))} n \log n$), these ordered values
may be extracted with $\left( \frac{\ddim(\X)}{\eps} \right)^{O(\ddim(\X))} n$ more work.

Let $L$ be a discretized value considered above. We extract from $S$ a subset $S' \subset S$ 
that is a $\left( \frac{\eps}{\ddim(\X)} \cdot \frac{2}{L} \right)$-net for $S$. Map 
each point $p \in S$ to its closest net point $p' \in S'$, and maintain for each net point two lists of points 
of $S$ that are mapped to it --- one list for positively labeled points and one for negatively labeled points. We now create an 
instance of vertex cover for the points of $S$: An edge $e=(u,v)$ for $u \in V^+$ and $v \in V^-$ 
is added to the edge set $E'$ if the distance between the respective net points $u'$ and $v'$ is at most 
$\left(1-\frac{2\eps}{\ddim(\X)} \right) \frac{2}{L}$. 
Notice that $E' \subset E$,
because the distance between such $u,v$ is at most
$\left(1-\frac{2\eps}{\ddim(\X)} \right) \frac{2}{L}
+ \frac{2\eps}{\ddim(\X)} \cdot \frac{2}{L} 
=  \frac{2}{L}$.
Moreover, the edge set $E'$ can be stored {\em implicitly} by recording every pair of net points that are within 
distance $\left(1-\frac{2\eps}{\ddim(\X)} \right) \frac{2}{L}$ --- 
oppositely labeled point pairs that map (respectively) to this net-point pair 
is considered (implicitly) to have an edge in $E'$. 
By the packing property, the number of net-point pairs to be recorded 
is at most $\left( {\ddim(\X)}/{\eps} \right)^{O(\ddim(\X))} n$, and by employing a hierarchy,
the entire (implicit) construction may be done in time 
$2^{O(\ddim(\X))} n \log n + \left( {\ddim(\X)}/{\eps} \right)^{O(\ddim(\X))} n$.

Now, for a given $L$, the run of the greedy algorithm for vertex cover
can be implemented on this graph in time 
$\left( {\ddim(\X)}/{\eps} \right)^{O(\ddim(\X))} n$, as follows.
The greedy algorithm considers a pair of net points within distance 
$\left(1-\frac{2\eps}{\ddim(\X)} \right) \frac{2}{L}$. 
If there exist $u \in V^+$ and $v \in V^-$ that map to these net points, 
then $u,v$ are deleted from $S$ and from the respective lists of the net points. 
(And similarly if $u,v$ map to the same net point.) 
The algorithm terminates when there are no more points to remove, and correctness follows.

We now turn to the analysis. Since $E' \subset E$, the
guarantees of the earlier greedy algorithm still hold. 
The resulting point set may contain opposite 
labeled points within distance 
$\left(1-\frac{2\eps}{\ddim(\X)} \right) \frac{2}{L}
- \frac{2\eps}{\ddim(\X)} \cdot \frac{2}{L}
= \left(1-\frac{4\eps}{\ddim(\X)} \right) \frac{2}{L}$, 
resulting in a Lipschitz constant $L/(1-\frac{4\eps}{\ddim(\X)})$. 
This Lipschitz constant is slightly larger than the given $L$,
which has the effect of increasing the complexity term in $G(L)$ by factor
$\left(1-\frac{4\eps}{\ddim(\X)} \right)^{-\ddim(\X)/2} = 1+\Theta(\eps)$.
The final result is achieved by scaling down $\eps$ to remove the leading constant.

\section{Example: Earthmover and time-series metrics}\label{sec:example}

To illustrate the potential power of our approach, 
we analyze its potential for two well-known metrics, 
the earthmover distance which operates on geometric sets, 
and Edit Distance with Real Penalty, which operates on time-series. 
We use the earthmover distance again in Section \ref{sec:exp}
for our experiments.

\myparagraph{Earthmover distance}
We will analyze the doubling dimension of an earthmover metric, which 
is a natural metric for comparing two \emph{sets} of $k$ geometric features.
It is often used in computer vision;
for instance, an image can be represented as a set of pixels in a color space,
yielding an accurate measure of dissimilarity between color characteristics 
of the images~\citep{RTG}.
In an analogous manner, an image can be represented as a set of
representative geometric features, 
such as object contours \citep{GD-contour}, other features~\citep{GD-kernel},
and SIFT descriptions \citep{PW08}.
In these contexts, $k\ge 2$ is usually a parameter 
which models the number of geometric features identified inside each image.

We use a simple yet common version, denoted $(\X_k,\EMD)$, 
where each point in $\X_k$ 
is a multiset of size $k$ in the unit square in the Euclidean plane, 
formally $S\subset [0,1]^2$ and $|S|=k$ (allowing and counting multiplicities).
The distance between such sets $S,T\in \X_k$ is given by 
\beqn
\label{eq:EMD}
  \EMD(S,T) =\min_{\pi:S\to T} \Big\{\tfrac1k \sum_{s\in S}
\|s-\pi(s)\|_2\Big\},
\eeqn
where the minimum is over all one-to-one mappings $\pi:S\to T$.
In other words, $\EMD(S,T)$ is the minimum-cost bipartite matching between 
the two sets $S,T$, where costs correspond to Euclidean distance.

\begin{lem}
The earthmover metric over $\X_k$ 
satisfies $\diam(\X_k) \le\sqrt 2$
and $\ddim(\X_k) \le O(k\log k)$.
\end{lem}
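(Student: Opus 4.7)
This bound is immediate: for any $S,T\in\X_k$ and any bijection $\pi:S\to T$, every term $\|s-\pi(s)\|_2$ is at most $\sqrt 2$ (the diameter of $[0,1]^2$), so $\EMD(S,T)\le\tfrac1k\cdot k\sqrt 2=\sqrt 2$.

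\textbf{Doubling dimension.} The plan is, for each $S\in\X_k$ and radius $r>0$, to cover the ball $B(S,r)$ by $2^{O(k\log k)}$ balls of radius $r/2$ via a rounding argument on a Euclidean grid. Overlay a grid $G$ on $[0,1]^2$ with spacing $\delta:=r/(4\sqrt 2)$. For any $T\in B(S,r)$, fix an optimal matching $\pi$; the bound $\sum_i\|s_i-\pi(s_i)\|_2\le kr$, together with the trivial bound $\|s_i-\pi(s_i)\|_2\le\sqrt 2$, forces every $\pi(s_i)$ to lie in the region $R_i:=[0,1]^2\cap B(s_i,\min(kr,\sqrt 2))$. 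Round each $\pi(s_i)$ to a nearest point $g_i\in G\cap R_i$, and set $T':=\{g_1,\ldots,g_k\}\in\X_k$. Composing $\pi$ with this rounding yields a bijection $T\to T'$ of cost at most $\sqrt 2\,\delta=r/4$ per pair, so $\EMD(T,T')\le r/4<r/2$.

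It remains to count the distinct multisets $T'$. We have $|G\cap R_i|=O\paren{(\min(kr,\sqrt 2)/\delta)^2}$, and this is $O(k^2)$ in both regimes: when $kr\le\sqrt 2$ we get $O((kr/\delta)^2)=O(k^2)$; when $kr>\sqrt 2$ we get $O((\sqrt 2/\delta)^2)=O(1/r^2)$, which is $O(k^2)$ because $r>\sqrt 2/k$. Hence each of the $k$ coordinates has $O(k^2)$ choices, and the number of possible multisets $T'$ is at most $O(k^2)^k=2^{O(k\log k)}$. This gives the desired covering and therefore $\ddim(\X_k)\le O(k\log k)$.

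\textbf{Anticipated obstacle.} The only point requiring care is the case split on $r$ when bounding $|G\cap R_i|$: the sum bound $\sum_i\|s_i-\pi(s_i)\|_2\le kr$ is informative only when $r\lesssim 1/k$, while for larger $r$ one must instead rely on the trivial unit-square diameter bound. The two regimes happen to give the same asymptotic count $O(k^2)$ per coordinate, which is what makes the final product clean; choosing the grid spacing proportional to $r$ is what allows the bounds to scale automatically with the ball radius.
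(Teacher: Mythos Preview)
Your proof is correct and essentially identical to the paper's: both discretize $[0,1]^2$ at scale $\Theta(r)$, round the matched points of any $T\in B(S,r)$ to this discrete set, and count the resulting multisets as $k^{O(k)}$. The paper avoids your explicit case split on $r$ by directly invoking the packing property of an $r/2$-net inside a ball of radius $(k+\tfrac12)r$ around each $s_i$, which gives $k^{O(1)}$ candidates per coordinate uniformly in $r$; otherwise the arguments coincide.
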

\begin{proof}
For the rest of this proof, a point refers to one in the unit square, not $\X_k$.
Consider a ball in $\X_k$ of radius $r>0$ around some $S$.
Let $N$ be an $r/2$-net of the unit square $[0,1]^2$,
according to the definition in Section \ref{sec:defn}.
Now consider all multisets $T\subset[0,1]^2$ of size $k$ 
that satisfy the following condition:
every point in $T$ belongs to the net $N$ and is within (Euclidean) distance
$(k+1/2)r$ from at least one point of $S$.
Points in such a multiset $T$ are chosen from a collection of size at most
$k\cdot \ceil{\tfrac{(k+1/2)r}{r/2}}^{O(1)}\le k^{O(1)}$ 
(by the packing property of the net points in the Euclidean plane).
Thus, the number of such multisets $T$ is 
$\lambda\le (k^{O(1)})^k = k^{O(k)}$.

To complete the proof of the lemma, it suffices to show that the radius $r$ ball 
(in $\X_k$) around $S$ is covered by the $\lambda$ balls of radius $r/2$ 
whose centers are given by the above multisets $T$.
To see this, consider a multiset $S'$ such that $\EMD(S,S')\le r$,
and let us show that $S'$ is contained in an $r/2$-ball around one of the above
multisets $T$.
Observe that every point in $S'$ is within distance at most $kr$ 
from at least one point of $S$. 
Now ``map'' each point in $S'$ to its nearest point in the net $N$,
which must be less than $r/2$ away, by the covering property of the net.
The result is a multiset $T$ as above with $\EMD(S',T)\le r/2$.
\end{proof}

\myparagraph{Time-series distance metric}
To present another example of the utility of our classification algorithms,
we show that a commonly used metric model of sparse time-series vectors
(with unbounded real coordinates) actually has a low doubling dimension.
A widely used similarity
function for time series is the Dynamic Time Warp (DTW) \cite{YJF-98},
a non-metric distance function between two time series
which is similar to the $\ell_1$ norm, except that it also allows coordinate
deletions or insertions in order to align the two series. The latter operations are used
to ensure that the resulting series are of equal length, and these operations can also
correlate the respective peaks and troughs of the series. We will consider a simple and
popular metric version of DTW known as Edit Distance with Real Penalty
(ERP) \cite{CN-04}, which allows for insertions of zero-valued elements only.

The ERP distance is formally defined as follows.
Given time-series vectors $r$ and $s$ with unbounded real coordinates and
where the length of the longer series is exactly $m$,
we may insert into $r$ and $s$
any number of zero-valued coordinates (called {\em gaps}) to produce new series
$\tilde{r}$ and $\tilde{s}$ of equal length.
Let $R_p$ be the set of all time series of length $p \ge m$
which may be derived from $r$ via gap insertions,
and similarly $S_p$ for $s$. Then
$d_{\ERP}(r,s) = \min _{p\ge m, \tilde{r} \in R_p,\tilde{s} \in S_p}
\| \tilde{r} - \tilde{s} \|_1$.
The ERP distance can be computed in quadratic time \cite{CN-04}.

Our contribution is twofold: We show in Lemma \ref{lem:ts-dense} that a set of time series of
length at most $m$ may have doubling dimension $\Omega(m)$ under ERP, even when the
coordinate range is limited to $\{1,2\}$. This dimension is quite
high, and motivates us to consider \emph{sparse} time series, which form an active field of study
\cite{FM-08,ZPS-10,GNS-12,KD-13,YGJ-13}. We show in Lemma \ref{lem:ts-sparse} that the set of
time series vectors with only $k$ non-zero elements --- that is, $k$-sparse
vectors --- has doubling dimension $O(k \log k)$ under ERP, irrespective of the vector 
length $m$ and even when the coordinates are real and unbounded. 
We first prove the claim below, and then proceed to the lower-bound 
on the dimension of length $m$ vectors under $\ERP$.

\begin{claim}\label{clm:vectors}
Consider the set $T = \{1,2\}^m$, and an integer $d\in[4,m/2]$.
Then every vector $r \in T$ is within ERP distance $d$ 
of fewer than $\left( \frac{3em}{d} \right)^{2d}$
other vectors of $T$.

% For any constant $c \le \frac{1}{2}$ that is an integer multiple 
% of $\frac{1}{m}$, each vector $r \in T$ is within ERP distance $cm$ 
% of fewer than $\left( \frac{4e}{c} \right)^{2cm}$ other vectors of $T$.
\end{claim}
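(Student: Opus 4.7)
My plan is to unwind $d_{\ERP}(r,s)$ into an alignment picture: any pair of zero-insertions producing equal-length series corresponds to an order-preserving partial matching between coordinates of $r$ and coordinates of $s$. Let $a$ be the number of unmatched coordinates of $r$, $b$ that of $s$, and $c$ the number of matched pairs whose values differ. Because all coordinates of $r,s$ lie in $\{1,2\}$, each unmatched coordinate contributes at least $1$ to the $\ell_1$ cost (it is aligned to a gap of value $0$) and each mismatched pair contributes exactly $1$, so $d_{\ERP}(r,s)\ge a+b+c$. Crucially, matched positions on the two sides are in bijection, so $m-a=m-b$ and thus $a=b$; hence $d_{\ERP}(r,s)\le d$ entails an alignment with $2a+c\le d$.

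Next I would count, for fixed $(a,c)$, the number of $s\in T$ realizable by such an alignment. Given the skeleton, $s$ is determined by: the $a$ unmatched positions of $r$ ($\binom{m}{a}$ choices), the $a$ unmatched positions of $s$ ($\binom{m}{a}$), the subset of $c$ ``flipped'' matched pairs ($\binom{m-a}{c}$, determining matched values of $s$ as $r_i$ or $3-r_i$), and the $\{1,2\}$-value at each unmatched position of $s$ ($2^a$). Summing, the number of $s$ within $\ERP$ distance $d$ of $r$ is bounded by $N:=\sum_{2a+c\le d}\binom{m}{a}^2\binom{m-a}{c}\,2^a$.

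To show $N<(3em/d)^{2d}$, I would use $\binom{m}{k}\le m^k/k!$, regroup by $j:=2a+c$, and collapse the inner sum via $\binom{2a}{a}\le 4^a$ together with $\sum_a\binom{j}{2a}t^{2a}\le(1+t)^j$ (taking $t=\sqrt{8}$), yielding $N\le\sum_{j\le d}(4m)^j/j!$. Since $d\le m/2$ makes these terms grow by a factor of at least $8$ per step, the sum is within a small constant of $(4m)^d/d!\le(4em/d)^d$ (using $d!\ge(d/e)^d$). The target $(3em/d)^{2d}$ exceeds this by the factor $(9em/(4d))^d\ge(9e/2)^d$, which for $d\ge 4$ swamps all remaining constants.

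The main obstacle is not any deep estimate but the reduction from $a+b+c\le d$ to $2a+c\le d$; this halving is what makes the counting tight enough, and it rests entirely on the equal-length constraint $|r|=|s|=m$. Beyond this observation the argument is routine combinatorial estimation, with enormous slack between the actual bound $(m/d)^d$ and the target $(m/d)^{2d}$, so there is no need to track constants carefully.
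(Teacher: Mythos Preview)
Your argument is correct. The reduction to an order-preserving partial matching with $a=b$ unmatched positions on each side and $c$ mismatched pairs is valid, the key identity $m-a=m-b$ holds because $|r|=|s|=m$, and your counting $\sum_{2a+c\le d}\binom{m}{a}^2\binom{m-a}{c}2^a$ indeed upper-bounds the number of admissible $s$. The estimation via $j=2a+c$, the multinomial identity $\tfrac{j!}{(a!)^2 c!}=\binom{j}{2a}\binom{2a}{a}$, and the even-power binomial sum all check out, and the slack between $(4em/d)^d$ and $(3em/d)^{2d}$ is easily enough to absorb the residual constants.

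The paper's proof takes a somewhat coarser route. It views ERP as a three-stage procedure (insert up to $d$ gaps into $r$, perform up to $d$ substitutions, delete gaps), and bounds separately the number of insertion configurations by roughly $(3em/2d)^d$ and the number of substitution configurations by roughly $(3em/d)^d$; the product is then at most $(3em/d)^{2d}$. Your approach is sharper because you exploit the equal-length constraint to force $a=b$, which collapses the budget to $2a+c\le d$ and yields a count of order $(m/d)^d$ rather than $(m/d)^{2d}$. The paper's argument is slightly quicker to write down and does not need the alignment formalism, but it gives up a full factor of $d$ in the exponent; since the claim only asks for $(3em/d)^{2d}$, both approaches land comfortably.
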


\begin{proof}
We may view ERP on the vectors of $T$ as a procedure transforming a vector $r \in T$ 
into some vector $s \in T$ as follows: The procedure inserts gaps in $r$ to produce vector
$\tilde{r}$, uses substitutions to transform $\tilde{r}$ to $\tilde{s}$, and then deletes
all gaps (i.e., zero-valued elements) from $\tilde{s}$ to produce vector $s$.
Here, the cost of a substitution from $\tilde{r}$ to $\tilde{s}$ is 
considered to be $|\tilde{r}_i - \tilde{s}_i|\in\{1,2\}$,
while the insertions and deletions entail no cost.
But without loss of generality, 
we may assume that $\tilde{r}_i$ and $\tilde{s}_i$ are not both gaps.
It follows that whenever $\tilde{r}_i$ is produced by a gap insertion
or $\tilde{s}_i$ is a gap coordinate to be deleted,
there must be a substitution from $\tilde{r}_i$ to $\tilde{s}_i$. 
Thus, if $d_\ERP(r,s) \le j$ for $r,s \in T$, 
then the ERP procedure includes at most $j$ substitutions, 
and consequently at most $j$ insertions and at most $j$ deletions.

For a fixed $r$, the vector $\tilde{r}$ can be produced in one of at most
$$
  \sum_{j=0}^{d} {m+j \choose j}
  < d\binom{3m/2}{d}
  \le d \left( \frac{3em}{2d} \right)^{d}
$$
possible ways of inserting $j\leq d$ gaps elements among the $m$ 
coordinates of $r$.
(Here we used the standard formula for combination with replacement.)
Having produced $\tilde{r}$, the vector $\tilde{s}$ can be produced in at most 
$$
  \sum_{j=0}^{d} \binom{m+d}{j} 2^d 
  < d \binom{3m/2}{d} 2^d
  \leq d \left( \frac{3em}{d} \right)^{d}
$$ 
possible ways via substitutions in $j\leq d$ elements, 
where a single substitution sets some $\tilde{s}_i$ to one of two possible 
values different from $\tilde{r}_i$.
Having produced $\tilde{s}$, 
the vector $s$ is produced by simply removing all gaps in $\tilde{s}$. 
It follows that there are fewer than
$ d^2 \left( \frac{3em}{2d} \right)^{d} \left( \frac{3em}{d} \right)^{d}
  \leq \left( \frac{3em}{d} \right)^{2d}
$
vectors of $T$ within distance $d$ of $r$, as claimed.
\end{proof}

\begin{lem}\label{lem:ts-dense}
There exists a set $S \subset \{1,2\}^m$ whose doubling dimension 
under the $\ERP$ metric is $\Omega(m)$.
\end{lem}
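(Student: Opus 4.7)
The plan is to use Claim \ref{clm:vectors} to construct $S \subseteq T := \{1,2\}^m$ by a standard greedy packing argument so that $S$ has exponentially many points with pairwise ERP distance exceeding a fixed threshold, and then to derive the doubling-dimension lower bound from a telescoping ``aspect-ratio'' argument that exploits the bounded ratio between $\diam(S,\ERP)$ and the packing threshold.

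\emph{Construction of the packing.} Fix a constant $c$ large enough that $\alpha := 2\log_2(3ec)/c < 1$; for instance, $c = 20$ gives $\alpha \approx 0.74$. Set $d := m/c$; for $m$ sufficiently large, $d \in [4, m/2]$, so Claim \ref{clm:vectors} applies and tells us that every $\ERP$-ball of radius $d$ in $T$ contains fewer than $(3ec)^{2m/c} = 2^{\alpha m}$ vectors. Run the standard greedy procedure on $T$: iteratively pick a vector not yet discarded, add it to $S$, and discard every vector at ERP distance at most $d$ from it. Each iteration discards fewer than $2^{\alpha m}$ vectors, so the process yields a set $S$ of cardinality $|S| \geq 2^m / 2^{\alpha m} = 2^{(1-\alpha)m} = 2^{\Omega(m)}$ whose pairwise ERP distances strictly exceed $d$.

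\emph{Doubling dimension via aspect ratio.} Let $\lambda$ be the doubling constant of $(S,\ERP)$ and set $M(r) := \max_{z \in S}\,|\{y \in S : \ERP(z,y) \leq r\}|$; the doubling property then gives $M(2r) \leq \lambda \cdot M(r)$. Consider dyadic scales $r_j := 2^{j-1} d$ for $j = 0,1,\ldots,k$ with $k := \lceil \log_2(2c) \rceil = O(1)$, so that $r_k \geq m \geq \diam(S,\ERP)$ (using $\ERP \leq \|\cdot\|_1 \leq m$ on $T$). At the bottom scale the pairwise distances in $S$ strictly exceed $d = 2 r_0$, hence $M(r_0) = 1$; at the top scale $r_k$ exceeds the diameter, hence $M(r_k) = |S|$. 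Iterating the doubling inequality $k$ times yields $|S| \leq \lambda^k$, and therefore
\[
\ddim(S,\ERP) = \log_2 \lambda \;\geq\; \frac{\log_2 |S|}{k} \;\geq\; \frac{(1-\alpha)\,m}{\lceil \log_2(2c) \rceil} \;=\; \Omega(m).
\]

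The only delicate step is the joint calibration of $c$: we need $\alpha(c) < 1$ to make the greedy packing exponentially large, while simultaneously $k = O(\log c)$ must remain bounded so that the telescoping bound retains an $\Omega(m)$ exponent. Both requirements are satisfied by any sufficiently large constant $c$, since the elementary function $c \mapsto 2\log_2(3ec)/c$ tends to $0$ as $c \to \infty$. The conceptual content is the standard fact that a finite metric space of bounded aspect ratio housing $2^{\Omega(m)}$ points must exhibit a dyadic scale at which its doubling constant is $2^{\Omega(m)}$.
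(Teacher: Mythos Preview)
Your proof is correct and follows essentially the same route as the paper: both fix $d=\Theta(m)$, invoke Claim~\ref{clm:vectors} to bound ball cardinalities, run the identical greedy packing to obtain $|S|=2^{\Omega(m)}$ with minimum interpoint distance $\Theta(m)$ and diameter at most $m$, and then convert the bounded aspect ratio into $\ddim(S)=\Omega(m)$. The only cosmetic difference is that the paper cites Lemma~\ref{lem:doublpack} for the last step, whereas you unfold that lemma's proof as an explicit dyadic telescoping; also note that Claim~\ref{clm:vectors} is stated for integer $d$, so strictly you should set $d=\lfloor m/c\rfloor$ rather than $m/c$.
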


\begin{proof}
We will demonstrate that for all $m \ge 4 \cdot 35 = 140$,
there exists a set $S \subset \{1,2\}^m$ 
of cardinality $2^{m/2}$ with diameter at most $m$ 
and minimum interpoint distance at least $d=\lfloor \frac{m}{35} \rfloor$. 
As a consequence of Lemma \ref{lem:doublpack}, 
this $S$ has doubling dimension $\Omega(m)$.
Our proof uses a neighborhood counting argument similar to the 
one presented in \cite[Lemma 8]{BEKMRRS-03}.

We begin with the set $T = \{1,2\}^m$ of cardinality $2^m$. 
The maximum interpoint distance under ERP in $T$ is at most $m$, 
as this is the maximum distance under $\ell_1$ in $T$. 
By Claim \ref{clm:vectors} with $d=\lfloor \frac{m}{35} \rfloor$,
each vector $r \in T$ is within distance $d$ of fewer than
$\left( \frac{3em}{d} \right)^{2d} 
\le \left( 3e \frac{m}{m/35-1} \right)^{2m/35}
= \left( 3 \cdot 35 e \frac{1}{1-35/m} \right)^{2m/35}
\le (4 \cdot 35 e)^{2m/35}
< 2^{m/2}$ 
other points of $T$.
We now use $T$ to construct $S$ greedily, starting with the empty set and 
repeatedly placing in $S$ a point of $T$ at distance at 
least $d=\lfloor \frac{m}{35} \rfloor$ from all points currently in $S$. 
Each point added to $S$
invalidates fewer than $2^{m/2}$ other points in $T$ from appearing in 
$S$ in the future --- these are the points within distance $d$
of the new point. It follows that $S$ contains at least
$\frac{2^m}{2^{m/2}}=2^{m/2}$ points, as claimed.
\end{proof}

Although the doubling dimension of time-series vectors under ERP is
large, the situation for sparse vectors is much better, irrespective of the
vector length and even when the coordinates are unbounded reals.

\begin{lem}\label{lem:ts-sparse}
Every set $S$ of $k$-sparse time-series vectors of real unbounded coordinates 
and arbitrary length has doubling dimension $O(k \log k)$ under the $\ERP$
metric.
\end{lem}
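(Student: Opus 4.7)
The plan is to exploit the fact that under $\ERP$, inserting a zero coordinate is free, so every $k$-sparse vector is determined, up to $\ERP$-distance zero, by the ordered sequence of its (at most $k$) nonzero entries. The lemma will then follow from a net-based packing argument similar in spirit to the earthmover lemma above.

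First, fix $u \in S$ with ordered nonzero coordinates $u_1,\ldots,u_p$ ($p \le k$) and any radius $r>0$. I would show that every $v \in B(u,r)\cap S$ has each of its nonzero entries contained in
\[
I_u := [-r,r] \cup \bigcup_{i=1}^{p} [u_i-r,\, u_i+r],
\]
a union of at most $k+1$ intervals, each of length $2r$. The argument is that in the optimal $\ERP$ alignment of $u$ and $v$, each nonzero $v_j$ either gets matched to a gap (contributing $|v_j|\le r$) or to some $u_i$ (contributing $|v_j-u_i|\le r$); since every summand in the alignment cost is nonnegative and the total is $\le r$, each summand is individually at most $r$.

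Next, I would build an $\eta$-net $N$ of $I_u$ for $\eta := r/(4k)$, which needs $|N|=O(k^2)$ points by the packing property of the real line. For each $v\in B(u,r)\cap S$, define $\hat v$ to be the vector of the same length as $v$ in which every nonzero entry has been replaced by its nearest neighbor in $N$. Aligning $v$ with $\hat v$ position-by-position yields
\[
d_{\ERP}(v,\hat v) \le \sum_{j:\, v_j\ne 0} |v_j-\hat v_j| \le k\eta = r/4.
\]
Since $\ERP$ ignores zero coordinates, the equivalence class of $\hat v$ modulo $\ERP$-distance zero is determined by the ordered sequence of its entries lying in $N\setminus\{0\}$, a sequence of length $\le k$. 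The number of such equivalence classes is therefore at most $\sum_{j=0}^{k} |N|^j \le (k+1)|N|^{k} = 2^{O(k\log k)}$.

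Finally, to turn this into a doubling bound, I would take a maximal $(r/2)$-separated subset $T\subseteq B(u,r)\cap S$. If two distinct $v,v'\in T$ produced the same equivalence class of $\hat v$, the triangle inequality would give $d_{\ERP}(v,v')\le r/4+r/4 = r/2$, contradicting separation. Hence $|T|\le 2^{O(k\log k)}$, and the balls of radius $r/2$ centered at points of $T$ cover $B(u,r)\cap S$, yielding doubling constant $2^{O(k\log k)}$ and thus $\ddim(S)=O(k\log k)$. The main point of care is step one --- confirming precisely that any $\ERP$ alignment with total cost $\le r$ bounds every individual contribution by $r$ --- along with the book-keeping that $\hat v$ is still $k$-sparse and therefore represents a legitimate $\ERP$-equivalence class, which is where the ``free insertion of zeros'' property of $\ERP$ is doing the essential work.
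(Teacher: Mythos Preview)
Your argument is correct. Both your proof and the paper's hinge on the same two facts---that zeros are free under $\ERP$ (so a $k$-sparse vector is determined, up to $\ERP$-distance zero, by its ordered nonzero profile) and that a discretization-and-count yields a $k^{O(k)}$ packing bound---but the executions differ. The paper rounds every coordinate globally to a multiple of $p/k$, strips zeros, and then counts how many discretized $k$-sparse vectors are reachable from a fixed one by enumerating $\ERP$ ``operations'': at most $2^{2k}$ gap-insertion patterns into a length-$\le 2k$ profile, times at most $(2k)^{10k}\cdot 2^{2k}$ ways to distribute $10k$ signed unit-weight substitutions. Your route instead first localizes: from $d_{\ERP}(u,v)\le r$ you extract the pointwise fact that each nonzero $v_j$ lies in $I_u=[-r,r]\cup\bigcup_i[u_i-r,u_i+r]$, then round to an $O(k^2)$-size net of $I_u$ and count ordered sequences of length $\le k$ over that net. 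Your localization step is a clean geometric observation the paper does not isolate, and it makes the counting more transparent (sequences over a small alphabet rather than operation strings); the paper's operational count, on the other hand, never needs to argue about where individual coordinates of $v$ can land. Both feed into the same maximal-separated-set/density-constant wrapper to conclude $\ddim(S)=O(k\log k)$.
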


\begin{proof}
Similar to \cite{GK-13}, let the \emph{density constant} $\mu(S)$ of $S$ be
the smallest number such that for every $p>0$, 
every ball of radius $p$ contains at most $\mu(S)$ points of $S$ 
at mutual distance strictly greater than $\frac{p}{2}$.
It is known that the doubling constant of $S$ is at most the density constant,
i.e., $\lambda(S) \leq \mu(S)$.
Indeed, for each radius $p$ ball in $S$, 
take a maximal set (with respect to containment)
of points at mutual distance strictly greater than $\frac{p}{2}$, 
and let each point be the center of a ball of radius $\frac{p}{2}$; 
the maximality implies that the small balls cover all points of the larger one. 
We get that every ball in $S$ can be covered by at most $\mu(S)$ 
balls of half the radius. 

It thus suffices to prove an upper bound on the density constant $\mu(S)$.
To this end, consider a subset $T \subset S$ such that for some $p>0$,
all interpoint distances in $T$ are in the range $(4p,8p]$.
In what follows, we prove an upper bound on $|T|$.
First, for each vector $r \in T$, discretize the vector by rounding down each
coordinate $r_i$ to the nearest multiple of $\frac{p}{k}$, producing new set $T'$. 
Done over all coordinates, the rounding alters interpoint ERP
distances by less than $2k \cdot \frac{p}{k} = 2p$ in total, 
and therefore all interpoint ERP distances in $T'$ are in the range $(2p,10p]$.
We further remove from each $r \in T'$
all zero-valued coordinates, and this has no effect on interpoint ERP distances.
The resulting set is $T'$ of the same size as $T$, i.e., $|T'|=|T|$.

To bound $|T'|$, fix an arbitrary vector $r \in T'$, and consider the
number of distinct discretized $k$-sparse vectors at ERP distance 
at most $10p$ from $r$. 
The ERP procedure may add up to $k$ gaps to $r$ to produce $\tilde{r}$, 
and there are 
$\sum_{j=0}^k {k+j \choose j} 
\le \sum_{j=0}^k {2k \choose j} 
< 2^{2k}
$
possible gap configurations. 
Note that the length of $\tilde{r}$ is at most $2k$.
Moving to the substitutions, since all vectors of $T'$ are discretized into 
multiples of $\frac{p}{k}$ and are at distance at most $10p$, 
we can view the substitutions as adding or removing from these coordinates 
weight in units (multiples) of $\frac{p}{k}$, 
and there are in total $\frac{10p}{p/k}=10k$ such units. 
If we view each substitution as accounting for a unit weight, 
and associate each coordinate with a sign that encodes whether 
the weight will be added or subtracted from that coordinate,
then there are at most $(2k)^{10k}\cdot 2^{2k}$ possible substitution 
configurations to produce $\tilde{s}$. 
Having produced $\tilde{s}$, the vector $s$ is produced from it
by removing all gap elements. Altogether, 
$2^{\ddim(S)} \leq \mu(S) = |T|=|T'| \le 2^{2k} (2k)^{10k} 2^{2k} = k^{O(k)}$, 
from which the lemma follows.
\end{proof}

\section{Experiments}\label{sec:exp}

\begin{figure*}[t]
\begin{center}
\scalebox{.5}{
\scalebox{.7}[0.1]{\includegraphics{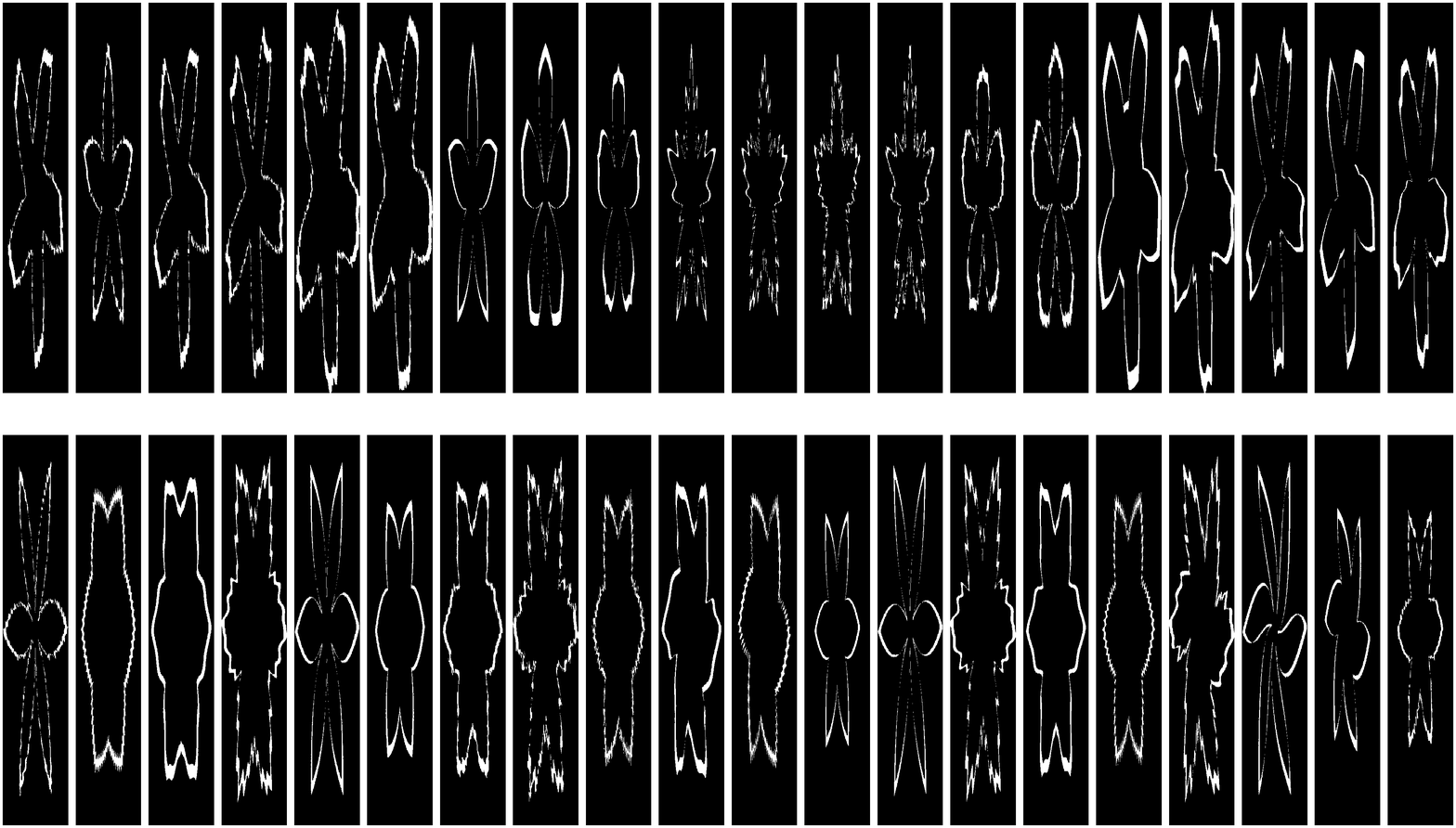}}}
\caption{%
The raw flower-contour data, $512\times512$ pixel black-and-white images.
The two kinds of flowers (five- and six-petaled) are displayed in separate rows.
}
\label{fig:flowers-clean}
\end{center}
\end{figure*}

We considered the task of distinguishing five-petaled flowers from six-petaled
ones. The images were taken from a shape matching/retrieval database 
called MPEG-7 Test Set\footnote{%
\url{http://www.dabi.temple.edu/~shape/MPEG7/dataset.html};
the 5-petaled flowers are under {\tt device0-1} 
and 6-petaled are under {\tt device1-1}.
},
and are displayed in Fig. \ref{fig:flowers-clean}.
The original images were represented as $512\times512$ black and white
matrices; we sampled these down to $128\times128$.
To render the task nontrivial, we retained only the image contour,
as otherwise, it would suffice to consider the ratio of black/white pixels
to achieve 100\% accuracy. 
To illustrate the relative advantage of earthmover distance
over the Euclidean one, we translated each image in the plane
by various random shifts. We ran four classification algorithms on this data:
\bit
\item {\bf Euclidean Nearest Neighbor.} 
The images were treated
as vectors in $\R^{128\times128}$, endowed with the Euclidean metric $\ell_2$.
\item {\bf EMD Nearest Neighbor.} 
The images were cut up into $16\times16=256$
square 
blocks, where each block $b$ is viewed as a vector $b\in\R^{8\times8}$.
Each image is thus represented as a sequence of $256$ blocks,
and over these sequences, EMD is defined as in (\ref{eq:EMD}) ---
except we used $\ell_1$ instead of $\ell_2$ as the base distance.
\item {\bf Euclidean SVM.} 
The Support Vector Machine (SVM) algorithm \cite{DBLP:journals/ml/CortesV95} 
was used, operating on vectors in $\R^{128\times128}$
with the Euclidean kernel $\iprod{x,y}=x\trn y$
and the regularization constant tuned by cross-validation.
\item {\bf SVM with RBF kernel.} 
The SVM algorithm was used, operating on vectors in $\R^{128\times128}$ 
with the Radial Basis Function (RBF) 
kernel $\iprod{x,y}=\exp(-\nrm{x-y}_2^2/\sigma^2)$,
where the regularization constant and $\sigma$ were 
tuned by cross-validation.
\eit

Our experimental results are listed in Table~\ref{tbl:experiments}. 
The relative magnitudes carry more significance than the absolute values,
as the latter fluctuate with experiment design choices,
such as the magnitude of the image translations, 
the thickness of the contour retained, and so forth.
These results exhibit a natural setting in which classification algorithms
for a non-Hilbertian metric significantly outperforms the Hilbert-space algorithms,
which is the main point we wished to illustrate here.

\begin{table}[t]
\begin{center}
\begin{tabular}{|l|c|}
\hline
Method                                    & Error \\ \hline
EMD nearest-neighbor 
                                          & 0.13        \\ 
Euclidean nearest-neighbor                & 0.39     \\ 
Euclidean SVM                             & 0.43        \\ 
SVM with RBF kernel                       & 0.39     \\ 
\hline
\end{tabular}
\caption{Experiments classifying the flower images.
Each method is averaged over hundreds of experiments.
}
\label{tbl:experiments}
\end{center}
\end{table}

\section*{Acknowledgements}
We thank the editor and anonymous referees for helpful
suggestions for the manuscript. 
Thanks also to Shai Ben-David
and Shai Shalev-Shwartz for sharing their book draft
and to Ulrike von Luxburg for enlightening discussions.

\bibliographystyle{plain}

%\IfFileExists{mybib.bib}{
%\bibliography{mybib,ieee}
%}{%else
%\bibliography{../../latex_files/mybib,ieee}
\bibliography{ieee}
%}

\end{document}